\newcommand{\BibTeX}{B\kern-.05em{\sc i\kern-.025em b}\kern-.08em\TeX}
\newcommand{\ff}{\mbox{\textit{ff}}}    
\newcommand{\mUnacc}{\Delta}
\newcommand{\mAcc}{I}
\DeclareRobustCommand{\DirectNESS}{(\tikz[baseline=-\the\dimexpr\fontdimen22\textfont2\relax,inner sep=0pt] \draw[dash pattern={on 4.5pt off 4.5pt}](0,0) -- (5mm,0);)}
\DeclareRobustCommand{\NESS}{(\tikz[baseline=-\the\dimexpr\fontdimen22\textfont2\relax,inner sep=0pt] \draw[dash pattern={on 0.84pt off 2.51pt}](0,0) -- (5mm,0);)}
\DeclareRobustCommand{\actual}{(\tikz[baseline=-\the\dimexpr\fontdimen22\textfont2\relax,inner sep=0pt] \draw[line width=0.75](0,0) -- (5mm,0);)}
\begin{document}
\title{An action language-based formalisation of an abstract argumentation framework}
\titlerunning{Action language for argumentation}
%
\author{Yann Munro\inst{1} \and
Camilo Sarmiento \inst{1} \and
Isabelle Bloch \inst{1} \and
Gauvain Bourgne\inst{1} \and
Catherine Pelachaud\inst{2} \and
Marie-Jeanne Lesot\inst{1}}
\authorrunning{Y. Munro et al.}
%
\institute{Sorbonne Université, CNRS, LIP6, F-75005 Paris, France \\
\email{firstname.surname@lip6.fr} \and
CNRS, Sorbonne Université, ISIR, F-75005 Paris, France \\
\email{firstname.surname@sorbonne-universite.fr}}
\maketitle              
\begin{abstract}
An abstract argumentation framework is a commonly used formalism to provide a static representation of a dialogue. However, the order of enunciation of the arguments in an argumentative dialogue is very important and can affect the outcome of this dialogue. In this paper, we propose a new framework for modelling abstract argumentation graphs, a model that incorporates the order of enunciation of arguments. By taking this order into account, we have the means to deduce a unique outcome for each dialogue, called an extension. We also establish several properties, such as termination and correctness, and discuss two notions of completeness. In particular, we propose a modification of the previous transformation based on a "last enunciated last updated" strategy, which verifies the second form of completeness.

\keywords{Abstract Argumentation Framework  \and Action Description Language \and Temporality.}
\end{abstract}
\section{Introduction}

The notion of dialogue, defined as an exchange between at least two agents, either real or virtual, is a central element during an interaction. No matter which type of dialogue game is played~\cite{walton1995commitment}, elements called \emph{arguments} are traded one after another between the participants following a set of rules. Formally, these dialogues can be modelled using the abstract argumentation framework~(AAF) introduced by Dung~\cite{dung_acceptability_1995}. Indeed, AAF provides tools for both creating a dialogue system~\cite{black2021argumentation} and reasoning on it. Using these arguments and a binary relation called the attack relationship, it is possible to represent and reason about contradicting information. For that, sets of arguments that can be accepted together, called extensions, are identified~\cite{baroni2011introduction}. In addition, explanations on the reasons why such sets are accepted or not can be returned~\cite{cyras_survey,vassiliades2021argumentation}. However, in the classical version of formal argumentation, there is no focus on the order in which arguments are stated. Since this is very important to establish the causal relations in the dialogue, a notion of temporality has to be included in order to better model a dialogue. Note that we observe the dialogue and do not take part in it. Thus, in contrast to argumentation based dialogue systems~\cite{black2021argumentation}, we adopt a descriptive approach to model and reason on an AAF. Moreover, we are using an abstract argumentation framework as an input. It means that arguments are abstract objects and the attack relation is only a way to model conflicts between them. Doing so, in contrast to the works done in argumentation theory~\cite{walton1996argument,walton2008argumentation}, we are focusing neither on the argument structure nor on the nature of the attack.

In this paper we propose to investigate the use of a Labelled Transition System (LTS) to include the enunciation order of arguments in the dialogue formalisation. We do not consider a LTS as a language for reasoning about action and change, but rather as a general way of adapting to a dynamic environment. 
Thus, in practice we consider an action description language (ADL) to formalise our work. In particular, we use the ADL developed in~\cite{sarmiento_action_2022-1} due to its well adapted tools for reasoning about causality, which is the first step towards explanations.
Our contributions are as follows: first a new framework is proposed to model and reason on dialogues based on an ADL and following abstract argumentation principles. This new method allows the order in which arguments are presented to be included in the model:
we establish a set of rules, inspired from argumentation labellings, to model the argumentative process, leading to a unique outcome, i.e. the acceptability status of each argument. In contrast to discussion games for a given semantic~\cite{caminada2017argumentation} where a dialogue is built a posteriori to find and justify the membership of an argument to the studied semantics, we model the actual dialogue in real time and update the acceptability thanks to these defined rules. Secondly, we study the formal properties of this method. In particular, we prove its termination and its correctness with respect to AAF 
as well as a discussion about two notions of completeness.
Finally, we propose a modification of the previous transformation based on a ``last enunciated last updated" strategy. Thanks to that, our second version of the notion of completeness is satisfied.


\section{Related work}

The main problem addressed in this work is the modelling of the order of enunciation of arguments in a dialogue in a formalism that is suitable for reasoning about the causality of the acceptability status of an argument. Both questions of temporality and causality have been studied independently in the context of abstract argumentation, while we aim to do both together.

\subsubsection{Temporality in AAF.} Barringer et al.~\cite{barringer2005temporal,barringer2012temporal}
propose different scales to include temporality in abstract argumentation: the ``object level" is focusing on the temporal evolution of each component of the graph while at the ``meta level" a snapshot of the whole system is taken at each time step. One of the solutions they suggested, without formalising it, is to include the action of adding arguments. This idea has been taken up in work aimed at modelling persuasion or negotiation between different agents. In particular, Arisaka and Satoh~\cite{arisaka2018abstract} propose an extension of AAFs called abstract persuasion argumentation (APA). These authors, in addition to adding a ternary relation modelling the persuasion process, define a transition operator that allows moving from one state, i.e. an APA, to the next one by adding or removing arguments and relations. In our paper, a possible formalisation of this idea is also proposed, based on an ADL. It gives both an ``object level" (with the traces) and ``meta level" (with the argumentative states) description of the temporality in the argumentation framework. However, in contrast to their approach where cycles and temporality are two separated concerns, we propose to make use of the temporality to deal with cycles which, to the best of our knowledge, has not been done before.


\subsubsection{Rewriting of the AAF.} Another approach to tackle the temporality issue is to extend the argumentation graph using logical languages capable of modelling this notion of temporality. The YALLA logical formalism~\cite{de2016argumentation} proposes to use revision or belief change operators to update an argumentation system. It offers a highly expressive language that allows finding which attack relations or arguments should be removed or added in order to achieve a given acceptability status at the subsequent time step. This reasoning can be used to build an explanation. However the number of terms manipulated by YALLA increases exponentially with the number of arguments, which is not suited to model a whole dialogue. The approach proposed in~\cite{doutre2017dynamic} proposes to use a dynamical propositional language to model the argumentation graph and its evolution across the dialogue. Nevertheless, it does not include a model of the notion of causality.

\subsubsection{Sequence of AAF.} Recently, Kampik et al.~\cite{kampik2024change} proposed to create one argumentation graph at each time step in the context of a quantitative bipolar framework (QBAF) and then derive different types of explanations to explain the changes between two graphs. In contrast to AAF, in a QBAF the acceptability of an argument is a numerical value computed using an acceptability function. Thus, a first difference with respect to our proposal lies in the nature of what is explained. Secondly, the focus of Kampik et al.'s work is to explain the changes between two graphs and not necessarily to model the temporality of a dialogue. Finally, unlike this work, our work is based on an ADL that provides the causal foundations necessary for the explanation process.

\subsubsection{Causality in argumentation.} 
Bengel et al.~\cite{bengel2022argumentation} use a knowledge-based formalism to define counterfactual reasoning for structured argumentation. Their definition is the direct application of Pearl's definition for causal model~\cite{pearl2000models}, similar to what is called a \textit{but-for} test. However, in some scenarios such as over-determination, this test does not reflect the human intuition of what is really the cause of what happened~\cite{menzies_counterfactual_2020}. Bochman in~\cite{bochman2005propositional}, extended in~\cite{bochman2021logical}, establishes an equivalence between abstract argumentation and a causal reasoning system.
While these works take inspiration from the social science literature, they focus on classical AAFs without temporality and so do not take the order of enunciation into account.


While there is work on incorporating a notion of temporality or studying causality, to the best of our knowledge none has provided a framework to do both simultaneously, which is the objective of this work. 

\section{Preliminaries}
\label{sec:prelim}

This section recalls the basics of Dung's AAF~\cite{dung_acceptability_1995} and illustrates them with an example. It then recalls the formal aspects of the ADL proposed in~\cite{sarmiento_action_2022-1}.

\subsection{Abstract Argumentation Framework (AAF)}
\label{sec:AAF}

An~$AAF$ is a couple~${AF = (A,\mathcal{R})}$, where~$A$ is a finite set of arguments and~$\mathcal{R}$ is a set of \emph{attacks} defined as a binary relation on~$A \times A$. Argument~$x\in A$ attacks~$y\in A$ if~$(x,y)\in \mathcal{R}$. As $\mathcal{R}$ is a binary relation with a finite support, $AF$ can be represented as a graph, called the argumentation graph.

For a given argumentation graph, an \emph{extension} is a subset of arguments that can be accepted together, where acceptability relies on the following notions:  The \emph{set of direct attackers} of~$x \in A$ is denoted by~$Att_{x}=\{ y\in A \mid (y,x)\in \mathcal{R}\}$. A set~$S \subseteq A$ is \emph{conflict-free} if~${\forall (x,y) \in S^2}$, $(x,y)\notin \mathcal{R}$. An argument~$x \in A$ is \emph{acceptable} by~$S$ if~$\forall y\in 
Att_x, S \cap Att_y \neq \emptyset$. $S$ is an \emph{admissible set} if it is conflict-free and all its elements are acceptable by~$S$ itself.


In the case of cyclic argumentation graphs, different semantics for the extensions have been proposed.
Here, we focus on Dung's admissible based ones:
the complete semantics, denoted~$\Sigma_c$, defines a complete extension as an admissible set~$S$ where all the arguments acceptable by~$S$ are included in~$S$. The grounded semantics~$\Sigma_g$ defines the grounded extension as the unique minimal complete extension. The preferred semantics~$\Sigma_p$ defines a preferred extension as a maximal complete extension. Finally, the stable semantics~$\Sigma_s$ defines a stable extension as a conflict-free set~$S$ where all the arguments not included in~$S$ are attacked by~$S$. Note that such an extension does not always exist.

Still in the case of cyclic graphs, another equivalent approach is to assign a value to each argument: IN, OUT or UNDEC.
These values can be computed using a labelling process~\cite{baroni2011introduction} that outputs a total function~$Lab$. An extension is then derived by taking all the arguments labelled IN. Depending on the properties of the considered $Lab$ function, the obtained semantics differ.

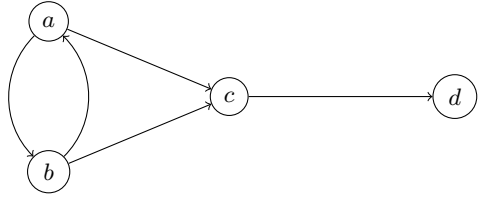
\begin{figure}[t]
    \centering
    \begin{tikzpicture}[scale=1,transform shape, node distance={30mm}, main/.style = {draw, circle}]

                \node[main] (1) {$a$};

                \node[main] (2) at (2.4,-1) {$c$};
                \node[main] (3) at (0,-2) {$b$};
                \node[main] (4) [right of=2] {$d$};

                \draw[->] (1) -- (2);
                \draw[->] (3) -- (2);
                \draw[->] (1) to [in = 135, out = -135] (3);
                \draw[->] (3) to [in = -45, out = 45] (1);
                \draw[->] (2) -- (4);

            \end{tikzpicture}
    \caption{
    Argumentation graph associated with Example~\ref{ex:IRM_ou_radio}.}
    \label{fig:ex_radio}
\end{figure}

\begin{example}
\label{ex:IRM_ou_radio}

Let us consider the following AAF: $AF = (\mathcal{A}, \mathcal{R})$ where $\mathcal{A} = \{a,b,c,d\}$ and $\mathcal{R} = \{(a,b),(b,a),(a,c),(b,c),(c,d)\}$. Its associated graph is presented in Figure~\ref{fig:ex_radio}.
Since $a$ attacks $b$ and $b$ attacks $a$, we can either accept one and reject the other, or choose none. This leads to either rejecting $c$ or not choosing, and then either $d$ is acceptable or undecided. Formally, there are three complete labellings for $AF$ corresponding to each scenario: $\mathcal{L}_1 = \{\{a,d\},\{b,c\},\{\emptyset\}\}$, ${\mathcal{L}_2 = \{\{b,d\},\{a,c\},\{\emptyset\}\}}$ and $\mathcal{L}_3 = \{\{\emptyset\},\{\emptyset\},\{a,b,c,d\}\}$ where the sets $IN$, $OUT$ and $UNDEC$ are $\mathcal{L} = \{IN, OUT, UNDEC\}$.

\end{example}

\subsection{Action Description Language (ADL)}
\label{sec:Action_language}

    
    Labelled Transition Systems generally define dynamical systems, describing the state of the world at any time step as a collection of variables that describe the properties of the world. Furthermore, they assert that the evolution of the world can be described as a series of states transitioning from one to another one as a result of a set of events. 
    In the sequel, $\mathbb{F}$ is the set of variables describing the state of the world, i.e.
    \emph{ground fluents} representing time-varying properties, and $\mathbb{E}$ is the set of variables describing transitions, 
    i.e. \emph{ground events} that modify fluents.
    
    \begin{definition}[Labelled Transition System (LTS)]\label{def:LTS}
        A \emph{Labelled Transition System} is a triplet~$\left<\mathbb{S},V,\tau\right>$ where~$\mathbb{S}$ is a set of states, $V$ is a function formally defined as~$V:\mathbb{S}\rightarrow2^\mathbb{F}$, and~$\tau$ is a set of \emph{labelled transition relations} $\tau\subseteq\mathbb{S}\times2^\mathbb{E}\times\mathbb{S}$.
    \end{definition}


    An ADL is a formalism describing a LTS~\cite{gelfond_action_1998}. Differences between ADLs are primarily determined by the conditions that the labelled transition relations~$\tau$ satisfy. One of our contributions is to adapt the one proposed in~\cite{sarmiento_action_2022-1} to AAF. We only describe below the elements of~\cite{sarmiento_action_2022-1} that are necessary to understand the adaptation we propose.
    
    A \emph{fluent literal} is either a fluent~$f\in \mathbb{F}$, or its negation~$\neg f$. The set of fluent literals in~$\mathbb{F}$ is denoted by~$Lit_{\mathbb{F}} = \mathbb{F}\cup\left\{\neg f \mid f\in \mathbb{F} \right\}$. The complement of a fluent literal $l$ is~$\overline{l}=\neg f$ if~$l=f$ and~$\overline{l}=f$ if~$l=\neg f$. We define \emph{state formulas}~$\mathcal{F}$ as formulas of fluent literals using classical logical operators.
    
    A set $S\subseteq Lit_{\mathbb{F}}$ is a \emph{state} if it is coherent $\left(\forall l\in S, \overline{l}\not\in S\right)$ and complete $\left(\forall f\in \mathbb{F},\right.$ $f\in S$ or $\left.\neg f \in S\right)$. In this formalism, the term complete should not be confused with the argumentation semantics~$\Sigma_c$. A state thus gives the value of each of the fluents describing the world. For~$\psi\in\mathcal{F}$, $S\vDash\psi$ is classically defined.
        
    
    An event~$e\in\mathbb{E}$ is an atomic formula. When considering causality, it is important to understand why an event~$e$ can occur and what are its consequences. Thus, three elements are needed to characterise each event: as detailed below, preconditions ($pre$) and triggering conditions ($tri$) are conditions that must be satisfied by a state~$S$ for the event to be triggered; effects indicate the changes to the state that are expected if the event occurs. The functions which associate preconditions, triggering conditions and effects with each event are respectively denoted by ${pre: \mathbb{E} \rightarrow \mathcal{F}}$, $tri: \mathbb{E} \rightarrow \mathcal{F}$, and~$e\ff: \mathbb{E} \rightarrow 2^\mathbb{F}$.
    
    The set $\mathbb{E}$ is partitioned into two  disjoint sets: 
    $\mathbb{A}$ contains the actions carried out by an agent and thus subjected to their volition; the set $\mathbb{U}$ contains the exogenous events which are triggered as soon as all the~$pre$ conditions are fulfilled, therefore without the need for an agent to perform them. Thus, for exogenous events,~$pre$ and~$tri$ are the same. By contrast, for actions, in addition to $pre$ conditions, $tri$ conditions also include the volition of the agent or some kind of external manipulation. To solve potential conflicts or to prioritise between events, a strict partial order~$\succ$ is introduced on $\mathbb{E}$, which ensures the triggering primacy of one event over another. 

    Sarmiento et al.~\cite{sarmiento_action_2022-1} focus on a path in the LTS, rather than the entire LTS. Formally, a path corresponds to a sequence of events and states where~$S(t)$ is the state we reach at a certain time~$t\in\mathbb{T}$ and~$E(t)$ the transitions that take us there
    where~$\mathbb{T}=\left\{0,\dots,N\right\}$. The \emph{initial state} is denoted by $S_0$.
    
    \begin{definition}[context $\kappa$]\label{def:context}
        A \emph{context}, denoted as $\kappa$, is an octuple $(\mathbb{E},\mathbb{F},pre, tri,$ $e\ff$, $S(0),\succ ,\mathbb{T})$, where $\mathbb{E}$, $\mathbb{F}$, $pre$, $tri$, $e\ff, S(0)$, $\succ$, and $\mathbb{T}$ are as defined above.
    \end{definition}
    
    As there is no information about when actions are performed in~$\kappa$, it is possible  to have more than one valid path for a given context~$\kappa$. In deterministic cases like ours, adding a set of timed actions~$\sigma\subseteq\mathbb{A}\times\mathbb{T}$ which models the volition of agents as an input, called \emph{scenario}, leads to a unique valid path satisfying: ${(i)~S(0)=S_0}$, ${(ii)~\forall t\in\mathbb{T},~\left(S(t),E(t),S(t+1)\right)\in\tau}$, and ${(iii)~\forall t\in\mathbb{T}, \forall e\in E(t)},$ ${e \in \mathbb{A} \Leftrightarrow (e,t) \in \sigma}$. From this unique path the event trace and state trace, denoted by~$\tau_{\sigma,\kappa}^e$ and~$\tau_{\sigma,\kappa}^s$, respectively, can be extracted.

    Another reason for using the ADL by Sarmentio et al.~\cite{sarmiento_action_2022-1} is its formal definition of actual causality, an essential building block of explanations~\cite{miller_explanation_2018}. Since the purpose of this work is to introduce the formalism, the discussion about causal reasoning and the process leading to explanations is left for future work.

\section{From AAF to ADL}
\label{sec:translation}
   
    This section presents our first contribution: the use of an ADL to enrich the AAF by taking into account the order of enunciation of arguments. Instead of having only a couple~$(A,\mathcal{R})$, the input is a couple~$(\Delta,\mathcal{R})$, where $\Delta$ is a \emph{dialogue}:
    ${\Delta = \{(a,o) \mid (a,o) \in A \times \mathbb{N}\}}$, where each argument~$a \in A$ is associated to its order of enunciation,~$o \in \mathbb{N}$.
    

    \subsection{Instantiating the Context}
    \label{sec:contextTranslation}

        To formalise an AAF in the ADL described in Section~\ref{sec:Action_language}, there are three main points to study: the fluents, the events, and the priority rules.
        
        \subsubsection{Describing the dialogue: the fluents.} Let us first define the variables necessary to describe the world, i.e. the AAF. These variables correspond to the fluents $\mathbb{F}$. As introduced in Section~\ref{sec:AAF}, there are two elements to consider: the arguments and the attack relation. To describe an argument~$x$, we create three fluents. First, $p_x \in\mathbb{F}$ represents the fact that~$x$ is present in the graph. Then, as reminded in Section~\ref{sec:AAF}, for cyclic argumentation framework, the acceptability status of arguments can take three values. To obtain this three-valued logic, we propose to use two fluents: $i_x$ (resp. $o_x$) represents the fact that~$x$ is in (resp. out). 
        Combining them makes it possible to get the three different acceptability values: ${a_x = i_x \wedge \neg o_x}$ corresponding to be acceptable or IN for labellings, ${na_x = \neg i_x \wedge o_x}$ corresponding to be not acceptable or OUT, and ${undec_x = \neg i_x \wedge \neg o_x}$ corresponding to the undetermined state or UNDEC. The fourth combination $i_x \wedge o_x$ is considered impossible. Another option could have been to create these three fluents $a_x,na_x$ and $undec_x$ directly. Nevertheless, to deal with cycles properly, we want to separate the acceptability status updates as much as possible. For instance, when an acceptable argument is attacked by another acceptable or undecided argument then it becomes undecided. In our formalism, it simply means being attacked by an argument that is \emph{not out}. Moreover, because the fourth combination is impossible, $x$ is acceptable is the same as $i_x$ holds, and $x$ is unacceptable is the same as $o_x$ holds. Therefore, creating those three fluents is not useful with this decomposition.

        The last element that is needed to model the dialogue is the attack relation~$\mathcal{R}$ between the arguments: we use the fluent $cA_{y,x}\in\mathbb{F}$ to model that argument~$y$ can attack argument~$x$.

        \addtocounter{example}{-1}
        \begin{example}(continued)
            If we apply this transformation on the previous example, we get that $\mathbb{F} = \mathbb{F}_{\mathcal{A}}\cup \mathbb{F}_{\mathcal{R}}$ where: \begin{itemize}
                \item $\mathbb{F}_{\mathcal{A}} = \{p_a,i_a,o_a,p_b,i_b,o_b,p_c,i_c,o_c,p_d,i_d,o_d \}$
                \item $\mathbb{F}_{\mathcal{R}} = \{cA_{a,b},cA_{b,a},cA_{a,c},cA_{b,c},cA_{c,d}\}$
            \end{itemize}
            
        \end{example}
        
        
        \subsubsection{Updating the argumentation graph: the events.}
        During a dialogue, the only deliberate action we consider is to enunciate an argument, which leads to ${\mathbb{A} = \{enunciate_x \mid x \in A \}}$.
        When performed, $x$ becomes present and is acceptable by default. It is also worth noting that if an argument is already present, enunciating it again has the effect of resetting its acceptability value to being acceptable. This way it is possible to repeat an argument that was previously rejected or undecided to reconsider its status. 
        Formally: \begin{center}
            $pre(enunciate_x)\equiv \top; \hspace{0.3cm}e\ff(enunciate_x)\equiv p_x\wedge i_x \wedge \neg o_x$
        \end{center} 

        \addtocounter{example}{-1}
        \begin{example}(continued)
            If we continue the transformation on the example, we get $\mathbb{A} = \{enunciate_a,enunciate_b,enunciate_c,enunciate_d\}$.
            
        \end{example}
        
        After an enunciation, the acceptability statuses of all present arguments are recursively updated according to rules that depend on the considered semantics~$\Sigma$ (see Section~\ref{sec:AAF}).

        A crucial question is that of the termination of this recursive process, meaning that a state where no acceptability rules can be triggered anymore is reached. We call such a state a \emph{$\Sigma$-argumentative state}, its definition depends on the considered semantics. In this section, we focus on the complete semantics~$\Sigma_c$, discussing other ones along with the technical difficulties they raise in Section~\ref{sec:discu_cycles}. We exploit the characterisation of complete labelling established in~\cite{caminada2011judgment}: a labelling is complete iff for an argument~$x$, $x$ is labelled IN iff all its attackers are labelled OUT, and it is labelled OUT iff it has at least one attacker that is labelled IN. This principle leads to the following definition:
        
        
        \begin{definition}[$\Sigma_c$-argumentative state]\label{def:complete_state}
            A state~$S(t)$ is a \emph{$\Sigma_c$-argumentative state} if:\\ 
            i) $\forall x, [S(t)\models p_x \wedge i_x \wedge \neg o_x \Leftrightarrow S(t)\models p_x \wedge \left(\bigwedge_{y} (cA_{y,x} \wedge \neg i_y \wedge o_y) \vee \neg cA_{y,x} \right)]$;\\
            ii) $\forall x, [S(t)\models p_x \wedge \neg i_x \wedge o_x \Leftrightarrow \exists y, S(t)\models  p_x \wedge p_y \wedge cA_{y,x} \wedge i_y \wedge \neg o_y]$.
        \end{definition}

        To reach a $\Sigma_c$-argumentative state after an argument is enunciated, acceptability updates are triggered automatically. 
        In abstract argumentation, an argument is acceptable only if it is unattacked or attacked only by unacceptable arguments. It is also enough for one of the attackers to be acceptable to make the attacked argument unacceptable. Finally, an argument with no acceptable attackers and at least one attacker that is not unacceptable is undecided. Because there might be cycles in the argumentation graph, we have to divide this update mechanism as much as possible. To do so, we use four types of updates, represented with four exogenous events: respectively modelling the transitions from acceptable to undecided (${\mUnacc^{1}_{y,x}}$), from undecided to unacceptable (${\mUnacc^{2}_{y,x}}$), from unacceptable to undecided (${\mAcc^1_x}$) and from undecided to acceptable~(${\mAcc^2_x}$), i.e. $\mathbb{U} = \{{\mUnacc^{1}_{y,x}},{\mUnacc^{2}_{y,x}}, {\mAcc^1_x},{\mAcc^2_x}\}$. Separating the process of moving directly from acceptable to unacceptable and vice versa into two steps is a crucial element of the transformation we propose. Indeed, it avoids going round in circles due to cycles and is therefore a fundamental element in ensuring the termination property.
        
        \subsubsection{From acceptable to undecided.}We first study the rule that leads an acceptable argument to become undecided. Suppose that an argument $y$ can attack argument $x$, and that $y$ is not unacceptable i.e. undecided or acceptable while~$x$ is acceptable. Then, $x$ being attacked by an argument~$y$ which is not unacceptable, it cannot be acceptable anymore. Formally, the exogenous event ${\mUnacc^{1}_{y,x}}$ can be written as:
            \begin{center}
                $tri(\mUnacc^{1}_{y,x})\equiv p_x \wedge i_x\wedge p_y\wedge cA_{y,x} \wedge \neg o_y;$ 
                
                $e\ff(\mUnacc^{1}_{y,x})\equiv \neg i_x$\phantom{$\wedge i_x\wedge p_y\wedge cA_{y,x} \wedge \neg o_y;$}
            \end{center}
            
        
        \subsubsection{From undecided to unacceptable.} Now if an acceptable argument~$y$ attacks an undecided argument~$x$, then~$x$ becomes unacceptable. This leads to the second exogenous event ${\mUnacc^{2}_{y,x}}$:
            \begin{center}
                $tri(\mUnacc^{2}_{y,x})\equiv p_x \wedge \neg o_x\wedge p_y\wedge cA_{y,x} \wedge i_y;$ 
                
                $e\ff(\mUnacc^{2}_{y,x})\equiv o_x$\phantom{$\wedge i_x\wedge p_y\wedge cA_{y,x} \wedge \neg o_y;;$}
            \end{center}

        
        \subsubsection{From unacceptable to undecided.}Now, let us suppose that argument~$x$ is unacceptable and that because of some circumstances, all its attackers have become not acceptable i.e. \emph{not in}. Then, $x$ cannot be unacceptable anymore and becomes not unacceptable i.e. \emph{not out}. 
        It is represented by event~$\mAcc^1_x$: 
        \begin{center}
            $tri(\mAcc^1_x) \equiv p_x\wedge o_x\wedge \left(\bigwedge_{y} ( cA_{y,x} \wedge \neg i_y) \vee \neg cA_{y,x} \right);$ \vspace{-1.5mm}
            
            $e\ff(\mAcc^1_x)\equiv \neg o_x$\phantom{$\wedge o_x\wedge \left(\bigwedge_{y} cA_{y,x} \wedge \neg i_y \vee \neg cA_{y,x} ()\right);$}
        \end{center}

        \subsubsection{From undecided to acceptable.}Finally, if argument~$x$ is not acceptable and is attacked only by arguments that are unacceptable then~$x$ becomes acceptable. This is translated in the ADL by event~$\mAcc^2_x$: 
        \begin{center}
            $tri(\mAcc^2_x) \equiv p_x\wedge \neg i_x\wedge \left(\bigwedge_{y} ( cA_{y,x} \wedge o_y) \vee \neg cA_{y,x} \right);$ \vspace{-1.5mm}
            
            $e\ff(\mAcc^2_x)\equiv i_x$\phantom{$\wedge o_x\wedge \left(\bigwedge_{y} cA_{y,x} \wedge \neg i_y \vee \neg cA_{y,x} ()\right);;;$}
        \end{center}
        
        With these exogenous events, we have broken down the process of making arguments acceptable or unacceptable in two steps by using the undecided status. This is necessary for termination when dealing with cycles to avoid going round in circles. For example, in the case of a two-element cycle, both arguments are first undecided and then the update is stopped, rather than simultaneously becoming acceptable, then unacceptable, then acceptable, and so on and so forth. In the case of acyclic graphs, it would have been possible to merge the first and second rules of each type. 

        \addtocounter{example}{-1}
        \begin{example}(continued)
            Enumerating all the exogenous events for that example would be too long: there are two $\mAcc$ events per argument, and then for each argument there are six $\mUnacc$ events so a total of $32$ exogenous events. For example for argument $a$ we have $\mathbb{U} = \{\mAcc^1_a,\mAcc^2_a,\mUnacc^1_{b,a},\mUnacc^2_{b,a},\mUnacc^1_{c,a},\mUnacc^2_{c,a},\mUnacc^1_{d,a},\mUnacc^2_{d,a}\}$.
            
        \end{example}
        
        \subsubsection{Cycles and Priority rules.}
        The decomposition through the undecided  status requires defining priority conditions taking into account two elements.
        The first one is that the decomposed rules ($\mAcc^1$ and $\mAcc^2$, $\mUnacc^1$ and $\mUnacc^2$) have to be triggered in the right order. 
        This prevents the fact that an argument~$x$ is both \emph{in} and \emph{out} which is forbidden. Therefore, we prioritise the rules numbered $1$ that lead to the undecided status before possibly lifting the indecision. It can be seen as first focusing on acceptable and unacceptable arguments before handling undecided ones. 
        This is reflected by the two rules: $R_1 := \mAcc^1_x \succ  \mAcc^2_x$ and $R'_1 := \forall x,y,$ $\mUnacc^1_{y,x} \succ  \mUnacc^2_{y,x}$. 
        
        The second element is to handle cycles and in particular to guarantee the termination of our transformation. To do that, we use the following priority rules: $\forall x_1,y_1,x_2,y_2,$ ${R_2 := \mAcc^1_{x_1}  \succ  \mUnacc^2_{y_2,x_2}}$ and $R_3 := \mUnacc^1_{y_1,x_1} \succ  \mUnacc^2_{y_2,x_2}$. While these rules are stronger than the previous ones, the principle behind them is the same: focusing on acceptable and unacceptable arguments before handling undecided ones. For more details, please refer to the termination proof in the supplementary material.  
        Note that the latter rule $R_3$ includes the second one $R'_1$ presented. However, $R_3$ is a stronger version needed for dealing with cycles whereas $R'_1$ can be seen more as an argumentative one. For example, in the case of acyclic argumentation graphs, only $R'_1$ is needed.

        \addtocounter{example}{-1}
        \begin{example}(continued)
            Let us consider the transformation without the priority rules described above. Suppose we are in a state $S(t_0)$ in which only $a$ is present and acceptable. At this instant, argument $b$ is enunciated leading to the next state $S(t_0+1)$. Since $S(t_0+1) \models i_a \wedge \neg o_a \wedge i_b \wedge \neg o_b$, $\mUnacc^1_{a,b}, \mUnacc^2_{a,b}$ and $\mUnacc^1_{b,a}, \mUnacc^2_{b,a}$ are triggered. We get that $S(t_0+2) \models \neg i_a \wedge o_a \wedge \neg i_b \wedge o_b$. Therefore, $\mAcc^1_a,\mAcc^2_a,\mAcc^1_b,\mAcc^2_b$ are triggered leading to a state $S(t_0+3) \models i_a \wedge \neg o_a \wedge i_b \wedge \neg o_b$. In this scenario, an endless loop is triggered. This is prevented by the first set of two rules.

            In the case of odd-length cycles, this is not enough and the second set of rules is required.            
        \end{example}        
        
            
            

    \subsection{ADL Semantics Adapted to AAF}
    \label{sec:semanticsTranslation}
    
        Having an adapted~$\kappa$ for the argumentative framework, we now propose to modify the ADL semantics to produce traces that are representative of the dialogue. For this purpose, arguments are stated from $\Sigma_c$-argumentative states step by step in the order determined by the dialogue~$\Delta$.

        First, let us define what is considered a valid path in a LTS adapted to AAF. Defining this can be seen as setting the conditions that the labelled transition relation~$\tau$ must satisfy. The modifications to the conditions in the original ADL~\cite{sarmiento_action_2022-1} can be found in conditions 2.d and 2.e. These conditions specify that an action in the sequence can only be triggered if no exogenous event is triggered at the same time point, and that the event trace cannot be empty.

        \begin{definition}[valid path in an argumentative context]\label{def:semantics_arg}
            Given an argumentative context~$\kappa$, a sequence $E(-1),S(0),\dots,S(N+1)$ is a \emph{valid path} w.r.t. $\kappa$ if $S(0)=S_0$ and~$\forall t\in\mathbb{T},\left(S(t),E(t),S(t+1)\right)\in\tau$ satisfy:\\
                \phantom{abc}1. $S(t)\subseteq Lit_{\mathbb{F}}$ is a state.\\
                \phantom{abc}2. $E(t)\subseteq\mathbb{E}$ satisfies:\\
                    \phantom{abc2.}2.a $\forall e\in E(t)$, $S(t) \models pre(e)$;\\
                    \phantom{abc2.}2.b $\nexists (e,e')\in E(t)^2,~e\succ e'$;\\
                    \phantom{abc2.}2.c $\forall e\in \mathbb{U}$ s.t. $S(t) \models tri(e)$, then $e \in E(t)$ or ${\exists e'\in E(t), ~e'\succ e}$;\\
                    \phantom{abc2.}2.d If $\exists e\in E(t)\cap\mathbb{A}$, then $\forall e'\in\mathbb{U}$, $S(t)\not\models tri(e')$;\\ 
                    \phantom{abc2.}2.e $E(t)\not=\varnothing$;\\
                \phantom{abc}3. $S(t+1)=\{l\in S(t)|\forall e\in E(t),\overline{l}\not\in e\ff(e)\} \cup\{l\in Lit_{\mathbb{F}}|\exists e\in E(t),l\in e\ff(e)\}$. 
        \end{definition}

        As in the ADL proposed in~\cite{sarmiento_action_2022-1}, there is potentially more than one valid path for a given context~$\kappa$. The form of scenario~$\sigma$ used in this previous work to obtain a unique path is not convenient for the formalisation of a dialogue because it requires that the number of steps that each chain of admissibility update events will take is known in advance. Without this information, it is not possible to plan at which time the next argument should be stated. To solve this issue we introduce a set of ranked actions~$\varsigma\subseteq\mathbb{A}\times\mathbb{N}$,
        called \emph{sequence}. The input to obtain unique traces is no longer a scenario~$\sigma$ but a sequence~$\varsigma$.

        \begin{definition}[argumentative setting $\chi$]\label{def:arg_setting}
            The \emph{argumentative setting} of the ADL, denoted by~$\chi$, is the couple~$(\varsigma,\kappa)$ with~$\varsigma$ a sequence and~$\kappa$ a context.
        \end{definition}

        \begin{definition}[valid path given~$\chi$]\label{def:traces_arg}
            Given an argumentative setting $\chi=(\varsigma,\kappa)$, a valid path w.r.t.~$\kappa$ is \emph{valid w.r.t. to~$\chi$} if:\\
            \phantom{abc}1. ${\forall t\in\mathbb{T}}$, ${E(t)\subset\left(\left\{a|\exists o\in\mathbb{N}, (a,o)\in\varsigma\right\}\cup\mathbb{U}\right)}$;\\
            \phantom{abc}2. ${\forall \left(\left(e,o\right),\left(e',o'\right)\right)\in\varsigma^2}$ s.t. $o<o'$, \\ \phantom{abc.2.}$\exists t,t'$, $e\in E(t), e'\in E(t') \mbox{ and } t<t'$;\\
            \phantom{abc}3. ${\forall \left(\left(e,o\right),\left(e',o'\right)\right)\in\varsigma^2}$ s.t. $o=o'$, $\exists t$, $(e,e')\in E(t)^2$.
        \end{definition}
        
        Given a valid path given~$\chi$, its \emph{event trace}~$\tau_{\chi}^e$ is its sequence of events, its \emph{state trace}~$\tau_{\chi}^s$ is its sequence of states.

\section{Formal properties}
\label{sec:discu_cycles}

In this section, we first establish and discuss some formal properties of the transformation proposed in Section~\ref{sec:translation},
including the correctness and termination of this transformation. Detailed proofs are given as supplementary material. After that, we discuss two forms of completeness. Then we propose to explore a modification of the transformation according to a new strategy that may lead to one of these forms of completeness. We finally discuss other semantics.

\label{sec:modif_translation}

We first show that, although valid paths given~$\kappa$ are not unique, valid paths given~$\chi$ are, as well as the corresponding traces~$\tau_{\chi}^e$ and~$\tau_{\chi}^s$. 
    
\begin{proposition}\label{prop:unicity_valid_path}
    Given an argumentative setting $\chi=(\varsigma,\kappa)$, traces $\tau_{\chi}^e$, $\tau_{\chi}^s$ are unique.
\end{proposition}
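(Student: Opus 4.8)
The plan is to prove the stronger statement, by strong induction on $t$, that the prefix $S(0), E(0), S(1), \dots, E(t), S(t+1)$ of any valid path given~$\chi$ is uniquely determined; uniqueness of the full path, and hence of the extracted traces~$\tau_{\chi}^e$ and~$\tau_{\chi}^s$, follows immediately. The base case is supplied by Definition~\ref{def:semantics_arg}, which fixes $S(0)=S_0$. For the inductive step it suffices, since $S(t+1)$ is a deterministic function of $S(t)$ and $E(t)$ via clause~3 of Definition~\ref{def:semantics_arg}, to show that $E(t)$ is uniquely determined by $S(t)$ together with the events fired strictly before $t$, which the induction hypothesis fixes.

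I would split the step according to whether some exogenous event is triggered at $S(t)$. Write $T_t=\{e\in\mathbb{U}\mid S(t)\models tri(e)\}$; this set depends only on $S(t)$. If $T_t\neq\varnothing$, then clause~2.d forbids any action in $E(t)$ (an action would force $T_t=\varnothing$), so $E(t)\subseteq\mathbb{U}$. I would then show that $E(t)$ equals the set $M$ of $\succ$-maximal elements of $T_t$. Every $e\in E(t)$ is triggered, because $pre=tri$ for exogenous events makes clause~2.a equivalent to being in $T_t$; and $e$ is $\succ$-maximal, since otherwise one could follow a $\succ$-ascending chain, which terminates by finiteness of $\mathbb{E}$, to a maximal triggered event that clause~2.c places in $E(t)$, contradicting the antichain condition~2.b. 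Conversely every element of $M$ lies in $E(t)$ by clause~2.c, again using~2.a and~2.b. Thus $E(t)=M$ is determined by $S(t)$ alone.

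If instead $T_t=\varnothing$, then no exogenous event can belong to $E(t)$, as its precondition $pre=tri$ fails and would violate clause~2.a, while clause~2.e forces $E(t)\neq\varnothing$; hence $E(t)$ consists solely of actions, and by clause~1 of Definition~\ref{def:traces_arg} only of actions occurring in~$\varsigma$. Here I would argue that conditions~2 and~3 of Definition~\ref{def:traces_arg} pin $E(t)$ down to the set of all actions carrying the least rank not yet fired in the prefix: condition~3 forces all equally-ranked actions into a single $E(t)$, condition~2 forces strictly increasing firing times for strictly increasing ranks, and clause~2.e forbids an empty step so the path cannot idle while actions remain. Combined with the induction hypothesis, which fixes the ranks already fired, this determines $E(t)$ uniquely, and $S(t+1)$ then follows from clause~3.

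The main obstacle is this last case: the ordering conditions~2 and~3 are phrased existentially, so I must be careful to derive from them that ranks are fired one whole group at a time in strictly increasing order, with no omissions and no idling, rather than merely that \emph{some} admissible ordering exists. The delicate points are ruling out that a higher rank fires before a lower one, which would make condition~2 unsatisfiable for that pair, and that a single rank group is split across several steps, which is excluded by condition~3. The non-idling guarantee of clause~2.e, together with the termination property established separately, is what ensures the induction runs to the end of~$\mathbb{T}$ without ever forcing a spurious empty or repeated firing.
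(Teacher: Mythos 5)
Your treatment of the exogenous case is correct and complete: when some $e\in\mathbb{U}$ has $S(t)\models tri(e)$, clause~2.d of Definition~\ref{def:semantics_arg} excludes actions, clause~2.a together with $pre=tri$ on $\mathbb{U}$ confines $E(t)$ to the triggered set, and clauses~2.b and~2.c pin $E(t)$ down to the $\succ$-maximal triggered events (finiteness and transitivity of the strict partial order give you the maximal majorant you need). That part determines $E(t)$ from $S(t)$ alone and is a genuinely useful observation. Since the paper only says its proof is by contradiction without giving details, I cannot compare routes precisely, but your inductive decomposition is at least as informative.

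The gap is exactly where you locate it, and it is not closed by the argument you sketch. In the case $T_t=\varnothing$, the constraints you invoke do not determine $E(t)$: condition~1 of Definition~\ref{def:traces_arg} is only an inclusion $E(t)\subset\{a\mid\exists o,(a,o)\in\varsigma\}\cup\mathbb{U}$, and conditions~2 and~3 are purely existential. Consequently nothing you cite forbids an action from firing at time points other than the one its rank ``intends'', nor a rank group from firing both jointly (satisfying condition~3) and again separately later, nor a higher rank from firing early provided it also fires again after the lower rank (condition~2 only asks for \emph{some} pair $t<t'$). Your claim that a premature firing ``would make condition~2 unsatisfiable for that pair'' is therefore not justified as stated. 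Worse, clause~2.e works against you rather than for you: once every rank of $\varsigma$ has been fired and a $\Sigma_c$-argumentative state is reached with time remaining in $\mathbb{T}$, $E(t)$ must be a nonempty set of actions, and the definitions do not single out which nonempty subset of the support of $\varsigma$ it is --- which is precisely a source of non-uniqueness your induction would have to resolve, not merely survive. To repair the proof you would need either an additional minimality/exactness reading of Definition~\ref{def:traces_arg} (each pair $(a,o)\in\varsigma$ accounts for exactly one firing, and no others occur) or an assumption tying the horizon $N$ to the last argumentative state; as written, the inductive step in the action case asserts the conclusion rather than deriving it.
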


This proposition is proved by contradiction (not detailed in this paper).

From now on, when reference is made to events and states, they will be those from the unique traces~$\tau_{\chi}^e$ and~$\tau_{\chi}^s$, respectively. Thus, the set of all events that actually occur at time point~$t$ is ${E^{\chi}(t)=\tau_{\chi}^e(t)}$. Following the same principle, the actual state at time point~$t$ is~$S^{\chi}(t)=\tau_{\chi}^s(t)$.

When defining the fluents, we wanted \emph{in} and \emph{out} to be impossible simultaneously. 
As considered there is no state in the trace state where an argument is both \emph{in} and \emph{out}, which shows the appropriateness of the proposed rules in this regard:

\begin{proposition}\label{lem:impossible_fluents}
    $\forall t, \forall x$, if $S^{\chi}(t)$ is a state then $S^{\chi}(t) \not\models i_x \wedge o_x$.
\end{proposition}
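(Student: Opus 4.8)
The plan is to argue by induction on the time point~$t$, establishing for every argument~$x$ simultaneously that $S^{\chi}(t)\not\models i_x\wedge o_x$. Note first that coherence of a state does \emph{not} settle this, since $i_x$ and $o_x$ are distinct fluents; the statement is genuinely about the dynamics. The base case $t=0$ is immediate: in the initial state $S_0$ no argument has been enunciated, so $i_x,o_x\notin S_0$ for every~$x$, and in particular $S_0\not\models i_x\wedge o_x$. (It suffices that $S_0$ be a state in which $i_x\wedge o_x$ fails, which holds for the natural initial state where every argument is absent.)

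For the inductive step, fix~$t$, assume $S^{\chi}(t)\not\models i_y\wedge o_y$ for all~$y$, and suppose for contradiction that $S^{\chi}(t+1)\models i_x\wedge o_x$ for some~$x$. By the frame condition (point~3 of Definition~\ref{def:semantics_arg}), $i_x\in S^{\chi}(t+1)$ forces one of: (a)~$i_x\in S^{\chi}(t)$ and no event of $E^{\chi}(t)$ carries $\neg i_x$ (the only such events being the $\mUnacc^1_{y,x}$); or (b)~some event of $E^{\chi}(t)$ carries $i_x$, i.e.\ $enunciate_x\in E^{\chi}(t)$ or $\mAcc^2_x\in E^{\chi}(t)$. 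Symmetrically, $o_x\in S^{\chi}(t+1)$ forces (c)~$o_x\in S^{\chi}(t)$ and no $\mAcc^1_x$ nor $enunciate_x$ in $E^{\chi}(t)$; or (d)~some $\mUnacc^2_{y,x}\in E^{\chi}(t)$. I would then rule out each of the four resulting combinations.

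Two combinations are short. Combination (a)+(c) gives $i_x,o_x\in S^{\chi}(t)$, contradicting the induction hypothesis. Any combination with $enunciate_x\in E^{\chi}(t)$ is excluded by condition~2.d: an action in $E^{\chi}(t)$ forbids every triggered exogenous event, whereas $enunciate_x$ itself produces $\neg o_x$, so $o_x$ can neither persist nor be supplied by a $\mUnacc^2_{y,x}$ (which would have to be triggered to lie in $E^{\chi}(t)$). Hence in the remaining cases $i_x$, when added, comes from $\mAcc^2_x$. Combination (b)+(d) with $\mAcc^2_x,\mUnacc^2_{y,x}\in E^{\chi}(t)$ is then settled by reading off preconditions: $pre(\mAcc^2_x)$ forces $o_y$ for the attacker $y$ with $cA_{y,x}$, while $pre(\mUnacc^2_{y,x})$ forces $i_y$, so $S^{\chi}(t)\models i_y\wedge o_y$, against the hypothesis.

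The two delicate combinations are (a)+(d) and (b)+(c), where one status fluent persists while its opposite is freshly produced. For (a)+(d): $i_x\in S^{\chi}(t)$ and $\mUnacc^2_{y,x}\in E^{\chi}(t)$; its precondition yields $p_x,p_y,cA_{y,x},i_y$, and the induction hypothesis turns $i_y$ into $\neg o_y$, so $\mUnacc^1_{y,x}$ is also triggered. Since $R'_1$ (equivalently $R_3$) gives $\mUnacc^1_{y,x}\succ\mUnacc^2_{y,x}$, condition~2.c forces either $\mUnacc^1_{y,x}\in E^{\chi}(t)$, violating the antichain condition~2.b, or some $e'\in E^{\chi}(t)$ with $e'\succ\mUnacc^1_{y,x}$, whence $e'\succ\mUnacc^2_{y,x}$ by transitivity of~$\succ$ and 2.b is again violated. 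Combination (b)+(c) is symmetric: $o_x$ persists, $\mAcc^2_x$ adds $i_x$, and its precondition together with the induction hypothesis shows $\mAcc^1_x$ is triggered with $\mAcc^1_x\succ\mAcc^2_x$ by $R_1$, giving the same contradiction. I expect these last two cases to be the main obstacle: the proof must convert ``the opposite update is also enabled and strictly preferred'' into an impossibility by combining the antichain property~2.b, the covering condition~2.c, and transitivity of~$\succ$. These are precisely the cases whose exclusion motivates the priority rules $R_1$ and $R'_1$.
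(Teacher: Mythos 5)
Your proposal is correct and follows essentially the same route as the paper: the paper argues via a minimal counterexample time $t_0$ rather than explicit induction, but the substance is identical — the same three core contradictions appear (your (a)+(d) is the paper's use of $\mUnacc^1_{y,x}\succ\mUnacc^2_{y,x}$, your (b)+(d) is its derivation of $i_{y_0}\wedge o_{y_0}$ at the earlier state, and your (b)+(c) is its use of $\mAcc^1_x\succ\mAcc^2_x$). Your treatment is if anything slightly more complete, since you explicitly dispose of the $enunciate_x$ case via condition~2.d and spell out the role of conditions~2.b and~2.c, which the paper leaves implicit.
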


This is an important proposition as it establishes a characterisation of being acceptable and unacceptable. Indeed, if an argument is \emph{in}, it is \emph{not out}, so being \emph{in} means being acceptable, and the same goes for \emph{out} and being unacceptable.

The next result is a characterisation of a $\Sigma_c$-argumentative state as a state where no exogenous event is triggered: such a state can be seen as an equilibrium state until a deliberate action, i.e. a new argument enunciation, occurs. Another way to say it is that an argumentative state corresponds to a real step in the dialogue while non argumentative ones are intermediate steps used to model the argumentative updating process.

\begin{lemma}\label{lem:argu_state_tri}
    Let $S^{\chi}(t)$ be a state. The two following propositions are equivalent:

    (i) $S^{\chi}(t)$ is a $\Sigma_c$-argumentative state as in Definition~\ref{def:complete_state}.

    (ii) $\forall e \in \mathbb{U}$, $S^{\chi}(t) \not\models tri(e)$.
\end{lemma}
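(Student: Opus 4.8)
The plan is to prove both implications of the lemma by contraposition, using Proposition~\ref{lem:impossible_fluents}: in any state $S^{\chi}(t)$ no argument satisfies $i_x\wedge o_x$, so each argument lies in exactly one of three mutually exclusive classes, $\mathrm{IN}$ ($i_x\wedge\neg o_x$), $\mathrm{OUT}$ ($\neg i_x\wedge o_x$) or $\mathrm{UNDEC}$ ($\neg i_x\wedge\neg o_x$). Under this trichotomy $o_x$ is equivalent to OUT and $i_x$ to IN, so ``not OUT'' reads as $\neg o_x$ and ``not IN'' as $\neg i_x$. I would first restate the two clauses of Definition~\ref{def:complete_state} in this language: call them the \emph{IN-clause} (``$x$ present and IN iff $x$ present and every attacker of $x$ is OUT'') and the \emph{OUT-clause} (``$x$ present and OUT iff $x$ has a present IN attacker''). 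The whole argument is then a case analysis matching each of the four exogenous events to a specific way one of these clauses can fail.

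For the direction (ii)$\Rightarrow$(i), I prove the contrapositive: if $S^{\chi}(t)$ is not a $\Sigma_c$-argumentative state, some exogenous event is triggered. Assume a clause fails for some $x$ and split on which side of the biconditional holds. If the left side of the IN-clause holds but its right side fails, $x$ is IN with an attacker $y$ (with $cA_{y,x}$) that is not OUT, i.e. $\neg o_y$, which is exactly $tri(\mUnacc^1_{y,x})$. If the right side of the IN-clause holds but the left fails, $x$ is present with all attackers OUT yet not IN, hence UNDEC (so $\mAcc^2_x$ fires) or OUT (so $\mAcc^1_x$ fires, all attackers being in particular not IN). Symmetrically for the OUT-clause: if its left side holds, $x$ is OUT with no present IN attacker, so every attacker is $\neg i_y$ and $\mAcc^1_x$ fires; if its right side holds, $x$ has a present IN attacker but is not OUT, hence IN (firing $\mUnacc^1_{y,x}$) or UNDEC (firing $\mUnacc^2_{y,x}$). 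Each branch satisfies a trigger, contradicting (ii).

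For (i)$\Rightarrow$(ii) I argue contrapositively again, starting from a triggered event and reading off the violated clause: $\mUnacc^1_{y,x}$ makes $x$ IN with a non-OUT attacker, so the IN-clause has a true left side and a false right side; $\mUnacc^2_{y,x}$ exhibits a present IN attacker of a non-OUT $x$, so the OUT-clause has a true right side and a false left side; $\mAcc^1_x$ makes $x$ OUT with every attacker not IN, falsifying the right side of the OUT-clause against a true left side; and $\mAcc^2_x$ makes $x$ non-IN with all attackers OUT, making the right side of the IN-clause true against a false left side. In every case $S^{\chi}(t)$ fails Definition~\ref{def:complete_state}.

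The main obstacle is handling attackers that have not yet been enunciated. The triggers of $\mUnacc^1_{y,x}$ and $\mUnacc^2_{y,x}$, and the witness in the OUT-clause, all carry the conjunct $p_y$, whereas the IN-clause quantifies over every $y$ with $cA_{y,x}$ irrespective of presence. For the correspondence above to hold I must guarantee that an absent attacker counts as OUT in the IN-clause while never spuriously activating $\mUnacc^1_{y,x}$; this is exactly the invariant $\neg p_y\Rightarrow o_y$, i.e. the convention that the initial state $S_0$ declares every not-yet-enunciated argument OUT, together with the facts that $p_x$ is never retracted and that $i_x,o_x$ of an absent argument are never modified (every trigger requires $p_x$). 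Establishing this invariant from $S_0$ and the event effects, and verifying that it precisely bridges the $p_y$-free IN-clause and the $p_y$-bearing triggers, is the delicate step; given it and Proposition~\ref{lem:impossible_fluents}, the remaining case checks are routine.
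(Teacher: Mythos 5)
Your proof is correct and follows essentially the same route as the paper's: a case analysis over the four exogenous events, using Proposition~\ref{lem:impossible_fluents} to complete partial status information into the IN/OUT/UNDEC trichotomy, and relying on the convention that a non-enunciated argument satisfies $\neg i_y \wedge o_y$ (which the paper also invokes, in one sentence, when proving clause (i) of Definition~\ref{def:complete_state} for attackers with $\neg p_y$). The only differences are organisational --- you phrase both implications contrapositively where the paper proves (ii)$\Rightarrow$(i) directly clause by clause --- and you make the absent-attacker invariant explicit rather than asserting it in passing.
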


\label{sec:modif_translation_prop}

We now establish the {\bf termination and correctness} of the transformation we propose. First, as a $\Sigma_c$-argumentative state is a state where nothing happens unless an argument is enunciated and $S(0)$ is a $\Sigma_c$-argumentative state, to prove termination we prove that such a state is always reached in a finite number of steps. This is established in Theorem~\ref{prop:termination}.

\begin{theorem}[Termination]\label{prop:termination}
    Given a $\Sigma_c$-argumentative state $S^{\chi}(t)$ and $x \in A$, if $enunciate_x \in E^{\chi}(t)$, then $\exists t'>t \in \mathbb{T}$ such that $S^{\chi}(t')$ is a $\Sigma_c$-argumentative state.
\end{theorem}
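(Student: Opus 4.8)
The plan is to reduce the statement to the termination of the purely exogenous update loop and then rule out infinite behaviour by a well-founded argument. First I would invoke Lemma~\ref{lem:argu_state_tri}: a state is $\Sigma_c$-argumentative exactly when no exogenous event is triggered in it. Since condition~2.d forbids an action and an exogenous event from occurring together, and the sequence $\varsigma$ fixes when enunciations happen, all the events fired strictly between the enunciation at $t$ and the next one are exogenous. Hence it suffices to show that, starting from the configuration produced by $enunciate_x$, the sequence of states in which only exogenous events fire reaches, after finitely many steps, a state where no exogenous event is triggered.

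Next I would exploit finiteness and determinism. The effects of the four exogenous events only ever flip the fluents $i_z,o_z$; they never touch any $p_z$ or $cA_{z,w}$. Thus after the enunciation the set of present arguments and the attack fluents are frozen, and by Proposition~\ref{lem:impossible_fluents} every present argument sits in exactly one of the three statuses IN/OUT/UNDEC, so the space of reachable configurations is finite. Moreover the update step is deterministic: conditions 2.b and 2.c pin down $E(t)$ as exactly the set of $\succ$-maximal triggered exogenous events, which is a function of the current state. A deterministic map on a finite set terminates iff it has no nontrivial cycle, so the whole problem reduces to showing that \emph{no reachable configuration recurs}.

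The engine of the argument is a structural lemma read off from the priorities. Because $R_2$ and $R_3$ make every $\Delta^1$ and every $I^1$ dominate every $\Delta^2$, an event $\Delta^2_{y,x}$ (a downgrade from UNDEC to OUT) can fire only in a state where no ``to-UNDEC'' event is triggered at all; in particular its witness $y$ is then IN with all of its own attackers OUT, so the downgrade is genuinely justified. Likewise, the triggering condition of $I^2_x$ requires all attackers of $x$ to be OUT, and by $R_1$ the upgrade from OUT never skips the intermediate UNDEC stage. Together these facts say that a commitment to OUT is performed only once the to-UNDEC activity has quiesced, which is exactly what stops the IN/OUT/IN oscillation that a two-element cycle, and, through the global strength of $R_2$ and $R_3$, an odd-length cycle, would otherwise produce, as the running example illustrates. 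I would then package progress as a well-founded lexicographic ranking whose top component measures the outstanding ``to-UNDEC pressure'' that the priorities force to be discharged before any $\Delta^2$, with lower components counting the remaining commitment moves.

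The main obstacle is precisely the verification that this ranking strictly decreases on every update step, because a single step may simultaneously demote some arguments (via $\Delta^1,I^1$) and promote others (via $I^2$, or via $\Delta^2$ when the system is to-UNDEC-quiescent), and a promotion can itself create a fresh instability: an $I^2_x$ firing together with an $I^1_w$ on an attacker $w$ of $x$ momentarily makes $x$ a new IN argument with a non-OUT attacker. Handling these mixed steps is where the strong, global form of $R_2$ and $R_3$ is essential, and is the part I expect to require the most care. If a single clean measure proves too delicate, the fallback is to argue directly that a nontrivial cycle of configurations is impossible: in such a cycle every argument is demoted as often as it is promoted, and tracing back the last $\Delta^2$ that puts an argument OUT, together with the fact that its IN witness was fully supported at that instant, forces an ever-receding chain of non-OUT attackers that a finite cycle cannot sustain.
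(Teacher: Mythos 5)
Your opening reductions are sound and match the paper's starting point: you invoke Lemma~\ref{lem:argu_state_tri} to recast the goal as quiescence of the exogenous cascade, you note that the exogenous effects only touch the $i_z,o_z$ fluents so the reachable configuration space is finite, and you observe that conditions 2.b--2.c make the update deterministic, so termination is equivalent to the absence of a recurring configuration. The problem is that the decisive step --- the well-founded measure whose strict decrease excludes recurrence --- is never actually supplied. You name a ``lexicographic ranking whose top component measures the outstanding to-UNDEC pressure'' without defining it, you yourself flag its verification on mixed steps as the part requiring the most care, and the fallback (tracing the last $\mUnacc^2$ in a hypothetical cycle) is likewise only gestured at. As written, the proposal locates the difficulty precisely but does not resolve it; this is a genuine gap, not a stylistic one.

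The observation the paper uses to close exactly this gap is a two-phase monotonicity argument that makes a global ranking unnecessary. Since $S^{\chi}(t)$ is quiescent, the only events enabled after $enunciate_x$ initially mention $x$, and the priorities $R_2$ and $R_3$ then force a first phase in which only $\mUnacc^1$ and $\mAcc^1$ events fire. Their effects are purely negative ($\neg i$, $\neg o$), so in this phase each fluent is falsified at most once and the phase ends after a number of steps linear in the number of arguments. Only once it has quiesced can $\mUnacc^2$ and $\mAcc^2$ fire, and their effects are purely positive ($o$, $i$), giving a second monotone phase with the same bound. The mixed steps you worry about --- a promotion creating a fresh instability --- are precisely what this separation excludes: no commitment event fires while any to-UNDEC event is pending, and within each phase every fluent moves in one direction only, so no configuration can recur. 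To complete your proof you would either adopt this phase decomposition explicitly, or define your ranking as the pair (number of $i_z,o_z$ literals still to be falsified in phase one, number still to be set in phase two) and check its decrease case by case against the priority rules --- which amounts to the paper's argument.
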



Secondly, we establish the correctness of the formalisation we propose. For that, we define a notion of associated labelling $\mathcal{L}_t$ for any $t \in \mathbb{T}$:

\begin{definition}\label{def:associated_labelling}
    Let $S^{\chi}(t)$ be a state. The associated labelling on $Ar = \{x \in A \mid S^{\chi}(t) \models p_x \}$ is the function $\mathcal{L}_t$ s.t.: 
    
    (i) $in(\mathcal{L}_t) = \{x \in Ar \mid S^{\chi}(t) \models i_x \wedge \neg o_x \}$,
    
    (ii) $out(\mathcal{L}_t) = \{x \in Ar \mid S^{\chi}(t) \models \neg i_x \wedge o_x\}$,
    
    (iii) $undec(\mathcal{L}_t) = \{x \in Ar \mid S^{\chi}(t) \models \neg i_x \wedge \neg o_x \}$.
\end{definition}

As the definition of a $\Sigma_c$-argumentative state is built based on a characterisation of a complete labelling, we establish the correctness property in the following Theorem:

\begin{theorem}[Correctness]\label{prop:correctness}
    Let $S^{\chi}(t)$ be a $\Sigma_c$ argumentative state and $\mathcal{L}_t$ its associated labelling. 
    Then, $\mathcal{L}_t$ is a complete labelling of the argumentation framework $AF = (Ar, \mathcal{R})$ where $Ar = \{x \in A \mid S^{\chi}(t) \models p_x \}$ and $\mathcal{R} = \{ (y,x) \mid S^{\chi}(t) \models p_x \wedge p_y \wedge cA_{y,x} \}$. 
\end{theorem}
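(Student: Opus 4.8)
The plan is to verify directly the two-clause characterisation of complete labellings recalled just before Definition~\ref{def:complete_state} (from~\cite{caminada2011judgment}): a labelling is complete iff, for every argument $x$, \textbf{(a)} $x$ is labelled IN iff all its attackers are labelled OUT, and \textbf{(b)} $x$ is labelled OUT iff it has at least one attacker labelled IN. Since the two biconditionals defining a $\Sigma_c$-argumentative state in Definition~\ref{def:complete_state} were deliberately designed as fluent-level transcriptions of exactly these two clauses, the proof is essentially a dictionary translation between the fluent vocabulary and the labelling vocabulary, mediated by Definition~\ref{def:associated_labelling} and the reading $(y,x)\in\mathcal{R}\Leftrightarrow S^{\chi}(t)\models p_x\wedge p_y\wedge cA_{y,x}$.

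First I would check that $\mathcal{L}_t$ is a genuine labelling of $Ar$, i.e. a total function partitioning $Ar$ into $in$, $out$ and $undec$. Mutual exclusivity of the three cases follows from Proposition~\ref{lem:impossible_fluents}, which forbids $i_x\wedge o_x$; exhaustiveness follows because $S^{\chi}(t)$ is a state and hence complete, so each present $x$ satisfies exactly one of $i_x\wedge\neg o_x$, $\neg i_x\wedge o_x$, $\neg i_x\wedge\neg o_x$. Consequently it suffices to establish clauses \textbf{(a)} and \textbf{(b)} for every $x\in Ar$, the UNDEC case being then forced as their logical complement.

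For clause \textbf{(b)}, fix $x\in Ar$. By Definition~\ref{def:associated_labelling}, $x\in out(\mathcal{L}_t)$ iff $S^{\chi}(t)\models p_x\wedge\neg i_x\wedge o_x$, which by Definition~\ref{def:complete_state}(ii) holds iff $\exists y$ with $S^{\chi}(t)\models p_x\wedge p_y\wedge cA_{y,x}\wedge i_y\wedge\neg o_y$; reading this as ``$\exists\, (y,x)\in\mathcal{R}$ with $y\in in(\mathcal{L}_t)$'' gives exactly clause \textbf{(b)}. Here the presence literal $p_y$ appears explicitly, so the identification with $\mathcal{R}$ is immediate. Clause \textbf{(a)} is treated symmetrically: $x\in in(\mathcal{L}_t)$ iff, by Definition~\ref{def:complete_state}(i), every $y$ carrying $cA_{y,x}$ satisfies $\neg i_y\wedge o_y$, i.e. is labelled OUT.

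The main obstacle is the quantifier reconciliation in clause \textbf{(a)}: the conjunction $\bigwedge_{y}\bigl((cA_{y,x}\wedge\neg i_y\wedge o_y)\vee\neg cA_{y,x}\bigr)$ in Definition~\ref{def:complete_state}(i) ranges over \emph{all} $y$ carrying the edge $cA_{y,x}$, whereas the attackers of $x$ in $AF=(Ar,\mathcal{R})$ are only the \emph{present} such $y$ (unlike condition (ii), clause (i) does not mention $p_y$). To close the gap I would show these two statements coincide, which reduces to checking that every non-present $y$ with $cA_{y,x}$ still satisfies the disjunct $\neg i_y\wedge o_y$ (and hence never spuriously falsifies the conjunction) --- equivalently, an invariant relating the attack fluents $cA_{y,x}$ to the presence fluents on all states reachable along $\tau^{s}_{\chi}$, inherited from the initial state and the $enunciate$ effects. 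Once this identification of the potential attackers with the $\mathcal{R}$-attackers is established, clause \textbf{(a)} reads precisely ``$x$ is IN iff all its $\mathcal{R}$-attackers are OUT''; combined with clause \textbf{(b)} and the characterisation of~\cite{caminada2011judgment}, this yields that $\mathcal{L}_t$ is a complete labelling of $(Ar,\mathcal{R})$, as claimed.
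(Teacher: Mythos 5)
Your proof is correct and follows essentially the same route as the paper's: both unfold Definition~\ref{def:associated_labelling} and the two biconditionals of Definition~\ref{def:complete_state} against the characterisation of complete labellings recalled from~\cite{caminada2011judgment}. You are in fact slightly more careful than the paper on two points it leaves implicit --- that $\mathcal{L}_t$ is a genuine total labelling (via Proposition~\ref{lem:impossible_fluents} and the completeness of states), and that every non-present $y$ with $cA_{y,x}$ satisfies $\neg i_y\wedge o_y$ so that the universal quantifier in clause (i) of Definition~\ref{def:complete_state} can be identified with quantification over the attackers in $\mathcal{R}$; that invariant does hold, since the initial state makes every absent argument \emph{out} and only $enunciate_x$ and the $p$-guarded exogenous events modify the $i$ and $o$ fluents.
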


\noindent The argumentation framework $AF$ defined in Theorem~\ref{prop:correctness} is called the associated graph of $S^{\chi}(t)$.


Theorem~\ref{prop:termination}, in combination with Proposition~\ref{prop:unicity_valid_path}, states that given an order of enunciation, there is a unique final state and that this state is a $\Sigma_c$-argumentative state. Then, using Theorem~\ref{prop:correctness}, the labelling associated with this state is unique and so it gives a unique complete extension of the associated argumentation framework, i.e. the whole dialogue. Therefore, modelling a dialogue with the formalism we propose instead of an argumentation framework allows us, thanks to the order of enunciation, to output a unique complete extension i.e. a unique outcome for the dialogue.

\label{sec:modif_translation_discu}


\subsubsection{Two versions of completeness.} A first definition of completeness is, given a dialogue~$\Delta$ and an argumentative semantics~$\Sigma$, to be able to find all the extensions of this semantics. This means that there is no loss of information when using the formalism we propose to model the dialogue instead of an argumentative one. By contrast, we gain some insight into how such a state has been reached thanks to the traces. The problem with this completeness is that it erases most of the benefits of modelling the order of enunciation of arguments. Indeed, if no matter what the order is we require to find all the extensions, then the order only matters for the traces. We want to go a bit further and use this temporality to discriminate between extensions. 

Thus, we propose to consider another form of completeness: for each extension of a given semantics, there exists an order of enunciation such that the labelling associated with its final state is equal to this extension. Using this definition, we can find all the extensions and at least one order of enunciation that leads to such an extension. Then, if an agent wants to reach a certain decision i.e. a particular extension, finding an order that leads to it can be seen as a debating strategy. This is the type of completeness we are looking for.

\subsubsection{A new transformation.} In the current state of the transformation, this second form of completeness does not hold. Indeed, to handle graphs with odd-length cycles, the two priority rules $R_2$ and $R_3$ lead to set the acceptability status of the arguments to undecided before applying other rules. For that reason, in the case of an even-length cycle, the only final state that can be reached is the one where all arguments are undecided. To deal with that issue, 
we explore here the possibility of another  strategy.

For example, let us consider the case where an argument~$x$ that is attacked and also attacks other arguments is enunciated. Without any further hypotheses, event~$\mUnacc^1$ may be triggered and change the acceptability status of~$x$ to undecided. Here, we consider the ``last enunciated last updated" strategy (discussing the appropriateness of this strategy is beyond the scope of this paper) i.e. we first apply all of the enunciated argument effects and then update its own status, eventually triggering another chain of exogenous events. By doing so, all attackers of~$x$ may become unaccepted and so the status of~$x$ may not change. To do that, the context~$\kappa$ has to be modified. First, we need a new fluent~$l_x$ that is set to true when an argument has just been enunciated. This fluent means that this argument has to be updated last. Then, the action $enunciate_x$ becomes: 
\\ $pre(enunciate_x)\equiv \top; \hspace{0.1cm}e\ff(enunciate_x)\equiv p_x \wedge l_x \wedge i_x \wedge \neg o_x$

Then, in order the acceptability update of a just stated argument to occur last, we create two instances of each exogenous event, e.g. $\mUnacc^1_{y,x}$ is replaced by $\mUnacc^{1f}_{y,x}$ and $\mUnacc^{1l}_{y,x}$ such that: 
\begin{center}
    $tri(\mUnacc^{1f}_{y,x})\equiv p_x  \wedge \neg l_x \wedge i_x\wedge p_y\wedge cA_{y,x} \wedge \neg o_y;$ 
    $e\ff(\mUnacc^{1f}_{y,x})\equiv \neg i_x$\phantom{$ \wedge \neg l_x \wedge i_x\wedge p_y\wedge cA_{y,x} \wedge \neg o_y;$} 
    $tri(\mUnacc^{1l}_{y,x})\equiv p_x \wedge l_x \wedge i_x\wedge p_y\wedge cA_{y,x} \wedge \neg o_y;$ \phantom{;} 
    $e\ff(\mUnacc^{1l}_{y,x})\equiv \neg i_x \wedge \neg l_x $\phantom{$ \wedge l_x \wedge i_x\wedge p_y\wedge cA_{y,x};;;$} 
\end{center}

Along the same lines, from the three other exogenous events we create six exogenous events, three for the earliest arguments and three for the latest ones. We define $\mathbb{U} = \mathbb{U}^f \cup \mathbb{U}^l$. 

Finally, the last step is to update the priority rules with the new exogenous events. The four previous rules become eight. Moreover, to force the model to update the latest arguments when nothing else can happen, the following rules are needed: $\forall e,e' \in \mathbb{U}^f \times \mathbb{U}^l, e \succ  e'$. 

With this update of the transformation, it is possible to find an order of enunciation for each extension of the complete semantics in the case of  even-length and odd-length cycles, solving a problem for the completeness of the first transformation. 

\begin{theorem}[Completeness of the modified transformation]
    Let $AF$ be an AAF and $\mathcal{L}_{c}$ be a complete labelling on $AF$. Then $\exists \varsigma$ a sequence such that the associated labelling $\mathcal{L}_t$ of the final argumentative state $S^{\chi}(t)$ verifies $\mathcal{L}_{c} = \mathcal{L}_{t}$.
\end{theorem}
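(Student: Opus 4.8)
The plan is to give a \emph{constructive} existence proof: from the target complete labelling $\mathcal{L}_c$ I build an explicit sequence $\varsigma$ that enunciates every argument of $A$ (so that the final $Ar$ equals $A$ and the two labellings share the same domain), invoke the termination theorem (Theorem~\ref{prop:termination}) to guarantee that a final $\Sigma_c$-argumentative state $S^\chi(t)$ is reached, and the correctness theorem (Theorem~\ref{prop:correctness}) to guarantee that its associated labelling $\mathcal{L}_t$ is complete. It then only remains to show $\mathcal{L}_t = \mathcal{L}_c$, which I do by maintaining an invariant along the phases of $\varsigma$. Throughout I use the Caminada characterisation underlying Definition~\ref{def:complete_state}: writing $S = in(\mathcal{L}_c)$, the set $S$ is conflict-free and admissible, $out(\mathcal{L}_c)$ is exactly the set of arguments attacked by some argument of $S$, and every argument of $undec(\mathcal{L}_c)$ has no attacker in $S$ but at least one attacker that is itself UNDEC.

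For the construction I order $\varsigma$ in phases. \textbf{Phase 1} enunciates all of $S = in(\mathcal{L}_c)$, in any order. Since $S$ is conflict-free, no $\mUnacc$ event can ever trigger between two of its members, so each stays acceptable and the invariant ``every argument of $S$ enunciated so far is IN'' holds. \textbf{Phase 2} enunciates all arguments of $out(\mathcal{L}_c)$. Here the ``last enunciated last updated'' mechanism is essential: when an OUT argument $z$ is enunciated it first becomes IN and, through the $\mathbb{U}^f$ events, may momentarily push an IN argument $u$ it attacks to UNDEC; but $z$ carries $l_z$, so its own demotion is deferred to the $\mathbb{U}^l$ events, by which time an IN attacker of $z$ that $z$ does not attack back (one exists because $z\in out(\mathcal{L}_c)$) makes $z$ OUT through $\mUnacc^2$, and the disturbed $u$ recovers via $\mAcc^2$. \textbf{Phase 3} enunciates the $undec(\mathcal{L}_c)$ arguments: since none of them attacks an argument of $S$ and each is attacked by another UNDEC argument, the two-step decomposition through the undecided status (the very reason for splitting $\mAcc$ and $\mUnacc$ into two events) drives them to UNDEC exactly as in the cyclic case of the unmodified transformation, without perturbing the robustly-OUT and robustly-IN arguments.

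\textbf{The main obstacle} is the case in which an OUT argument $z$ attacks \emph{all} of its own IN defenders, the archetype being a two-element cycle $u \leftrightarrow z$ with $u$ IN and $z$ OUT. There Phase~2 alone yields the \emph{opposite} labelling, because enunciating $z$ knocks its only defender $u$ to UNDEC and nothing can then push $z$ back to OUT. The remedy, and the place where the modified transformation pays off, is to re-enunciate the affected members of $S$ after Phase~2: when $u$ is re-enunciated it becomes IN and, crucially, the rule $\forall (e,e')\in\mathbb{U}^f\times\mathbb{U}^l,\ e\succ e'$ forces the $f$-event $\mUnacc^{2f}_{u,z}$ to fire \emph{before} $u$'s own $\mathbb{U}^l$ update could demote it, so $z$ is sent to OUT while $u$ remains IN. Proving that a re-enunciation pass over $S$ (in an order respecting the defence dependencies, which may be cyclic as in even-length cycles) converges \emph{exactly} to $\mathcal{L}_c$ without re-perturbing already-correct arguments is the delicate point; I would establish it with a monotonicity argument showing that the set of arguments sitting at their $\mathcal{L}_c$-label grows and is stable under subsequent enunciations, so that a fixpoint equal to $\mathcal{L}_c$ is reached.

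Finally, combining the phases: the sequence $\varsigma$ terminates in a $\Sigma_c$-argumentative state by Theorem~\ref{prop:termination}, this state has domain $Ar = A$ since every argument was enunciated, its associated labelling $\mathcal{L}_t$ is complete by Theorem~\ref{prop:correctness} and Definition~\ref{def:associated_labelling}, and the phase invariants give $in(\mathcal{L}_t)=S$, $out(\mathcal{L}_t)=out(\mathcal{L}_c)$ and $undec(\mathcal{L}_t)=undec(\mathcal{L}_c)$, that is $\mathcal{L}_t=\mathcal{L}_c$, as required.
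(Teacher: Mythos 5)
Your overall scaffolding (build $\varsigma$ explicitly, invoke Theorems~\ref{prop:termination} and~\ref{prop:correctness}, then show the resulting labelling coincides with $\mathcal{L}_c$) matches the paper, but your phase order is the reverse of the paper's, and that reversal is precisely what creates the difficulty you call ``the main obstacle''. The paper enunciates $out(\mathcal{L}_c)$ \emph{first}, then all of $in(\mathcal{L}_c)$ simultaneously, then $undec(\mathcal{L}_c)$. With that order the obstacle never arises: when the IN arguments are stated, they all carry the fresh $l$ flag, every argument of $out(\mathcal{L}_c)$ has an attacker among them, and the priority $\forall (e,e')\in\mathbb{U}^f\times\mathbb{U}^l,\ e\succ e'$ defers any demotion of an IN argument until after the $\mathbb{U}^f$ cascade has driven all OUT arguments to $\neg i\wedge o$ --- at which point the deferred $\mathbb{U}^l$ demotions are no longer triggered, since every attacker of an IN argument lies in $out(\mathcal{L}_c)$. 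Your Phase~1 (stating $S$ first, unprotected) is what exposes $S$ to being knocked down by the later enunciation of $out(\mathcal{L}_c)$, forcing you to invent a repair pass.

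The genuine gap is that this repair pass is asserted, not proved: the ``monotonicity argument showing that the set of arguments sitting at their $\mathcal{L}_c$-label grows'' is exactly the delicate combinatorial content of the theorem, and it is not obvious. Re-enunciating a single $u\in S$ protects $u$ only until no $\mathbb{U}^f$ event is triggered; if $u$ has an attacker $z\in out(\mathcal{L}_c)$ that $u$ does not attack back and that is not driven to OUT by the cascade $u$ initiates, the deferred $\mUnacc^{1l}_{z,u}$ fires and $u$ is demoted again, so a one-argument-at-a-time re-enunciation need not converge to $\mathcal{L}_c$. The cure is to re-enunciate \emph{all} of $S$ at the same rank, so that every OUT argument simultaneously has a protected IN attacker --- but then your Phase~1 is redundant and your construction collapses to the paper's (OUT, then IN, then UNDEC). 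Your Phase~3 analysis is sound and matches the paper's: IN arguments attack only OUT arguments in a complete labelling, so stating the UNDEC arguments perturbs neither $S$ nor $out(\mathcal{L}_c)$, and each UNDEC argument is sent to $\neg i\wedge\neg o$ by an UNDEC attacker via $\mUnacc^1$. I recommend you reorder the phases rather than patch the fixpoint argument.
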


Another interesting property that this modification permits is that repeating an argument has an even bigger impact: not only it resets its status to acceptable, but it also forces to re-evaluate the whole graph while considering this argument as acceptable first. 

Now if we look at the other formal properties, correctness still holds. Indeed, the characterisation of a $\Sigma_c$-argumentative state with the exogenous events still holds as well. The only difference is that now, the previously stated arguments are dealt with first, and then a new update loop occurs on the whole graph. This is the same for termination. The idea of the proof does not change. The only difference is that instead of breaking down the updating process into two steps, there are now four.

\subsubsection{Preliminary discussion about other semantics.}
Beyond the complete semantics studied in Section~\ref{sec:translation}, we discuss here other semantics and the technical issues they raise.
First, as there is a single grounded extension, constructing a correct transformation for this semantic will result in the first form of completeness, i.e. no matter what the order of enunciation is, the final argumentative state will remain the same: the grounded extension. This is not the behaviour we are aiming for. Now, the preferred semantic can be viewed as global semantic in so far as it is a maximum complete extension. As a consequence, it requires to determine all the complete or admissible extensions in a preliminary step. This is a global property that does not fit our approach. Indeed, as discussed before, our goal is to generate an extension given an order of enunciation. Thus, without a ``local" characterisation, i.e. independently from other semantics, it raises new technical challenges to guarantee that the labelling associated with the final state is a preferred one rather than just a complete one.

Regarding the stable semantics, it is possible to propose a definition of a \emph{$\Sigma_s$-argumentative state} based on the property that a labelling $\mathcal{L}$ is a stable one iff it is complete and $undec(\mathcal{L}) = \emptyset$.

\begin{definition}[$\Sigma_s$-argumentative state]\label{def:stable_state}
    A state~$S(t)$ is a \emph{$\Sigma_s$-argumentative state} if:\\ 
            i) $\forall x, [S(t)\models p_x \wedge i_x \wedge \neg o_x$ \\ \phantom{i) $\forall x$}$\Leftrightarrow S(t)\models p_x \wedge \left(\bigwedge_{y} (cA_{y,x} \wedge \neg i_y \wedge o_y) \vee \neg cA_{y,x} \right)]$;\\
            ii) $\forall x, [S(t)\models p_x \wedge \neg i_x \wedge o_x$ \\ \phantom{ii) $\forall x,$}$ \Leftrightarrow \exists y, S(t)\models  p_x \wedge p_y \wedge cA_{y,x} \wedge i_y \wedge \neg o_y]$; \\
            iii) $\forall x, \left[S(t)\models p_x \wedge ((i_x \wedge \neg o_x) \vee (\neg i_x \wedge o_x)) \right]$.
\end{definition}

The difficulty with the stable semantics and this definition is that there is no guarantee of the existence of at least one extension for each argumentation graph. If we follow the same methodology as the one used in Section~\ref{sec:translation}, then the termination is not guaranteed anymore. Indeed, let us assume that it is possible to prove that a $\Sigma_s$-argumentative state is equivalent to a state where no exogenous event is triggered and that it is correct. Now let us imagine the scenario given a $\Sigma_s$-argumentative state where a stable extension exists, a new argument is enunciated. However, there is no stable extension in the new associated argumentation graph. Then, unless we add a stopping condition, exogenous events will continue to be triggered for each state without stopping as it is impossible to reach this $\Sigma_s$-argumentative state. An in-depth study of these three semantics is left for future work.

\section{Conclusion}
\label{sec:conclusion}

This paper has proposed a formalisation of abstract argumentation systems integrating the order of enunciation of the arguments,
establishing and proving its formal properties for the complete semantics. Its advantage is twofold: it allows us to discriminate between the extensions of a semantics, and it increases the amount of information by saving the history of the whole dialogue in two traces.

This work is a first step in exploring how using ADLs can benefit AAFs for explanations. Indeed, the notion of causality associated with the ADL offers the possibility to give rich information about the argument's acceptability status and justifications about the latter. Therefore, ongoing work aims at formalising Miller's desired explanation properties~\cite{miller_explanation_2018} in the ADL framework, to propose a compliant explanation generation and ordering method. We also aim to study our ``last enunciated last updated" strategy in terms of appropriateness. Lastly, since this work is a theoretical one paving the way for practical applications, experimental evaluations are needed. Therefore, the next step is to perform user studies to assess  the intelligibility and coherence of the generated explanations, in terms of objective understanding and subjective satisfaction.

\bibliographystyle{splncs04}
\bibliography{mybibfile}

\begin{thebibliography}{10}
\providecommand{\url}[1]{\texttt{#1}}
\providecommand{\urlprefix}{URL }
\providecommand{\doi}[1]{https://doi.org/#1}

\bibitem{arisaka2018abstract}
Arisaka, R., Satoh, K.: Abstract argumentation/persuasion/dynamics. In: PRIMA 2018: Principles and Practice of Multi-Agent Systems: 21st International Conference, Tokyo, Japan, October 29-November 2, 2018, Proceedings 21. pp. 331--343. Springer (2018)

\bibitem{baroni2011introduction}
Baroni, P., Caminada, M., Giacomin, M.: An introduction to argumentation semantics. The Knowledge Engineering Review  \textbf{26}(4),  365--410 (2011)

\bibitem{barringer2005temporal}
Barringer, H., Gabbay, D., Woods, J.: Temporal dynamics of support and attack networks: From argumentation to zoology. Mechanizing Mathematical Reasoning: Essays in Honor of J{\"o}rg H. Siekmann on the Occasion of His 60th Birthday pp. 59--98 (2005)

\bibitem{barringer2012temporal}
Barringer, H., Gabbay, D.M., Woods, J.: Temporal, numerical and meta-level dynamics in argumentation networks. Argument \& Computation  \textbf{3}(2-3),  143--202 (2012)

\bibitem{bengel2022argumentation}
Bengel, L., Bl{\"u}mel, L., Rienstra, T., Thimm, M.: Argumentation-based causal and counterfactual reasoning. In: 1st International Workshop on Argumentation for eXplainable AI (2022)

\bibitem{black2021argumentation}
Black, E., Maudet, N., Parsons, S.: Argumentation-based dialogue. In: Handbook of Formal Argumentation, Volume 2. College Publications (2021)

\bibitem{bochman2005propositional}
Bochman, A.: Propositional argumentation and causal reasoning. In: IJCAI'05 (2005)

\bibitem{bochman2021logical}
Bochman, A.: A logical theory of causality. MIT Press (2021)

\bibitem{caminada2017argumentation}
Caminada, M.: Argumentation semantics as formal discussion. IFCoLog Journal of Logics and Their Applications  \textbf{4} (2017)

\bibitem{caminada2011judgment}
Caminada, M., Pigozzi, G.: On judgment aggregation in abstract argumentation. Autonomous Agents and Multi-Agent Systems  \textbf{22},  64--102 (2011)

\bibitem{doutre2017dynamic}
Doutre, S., Maffre, F., McBurney, P.: A dynamic logic framework for abstract argumentation: adding and removing arguments. In: Advances in Artificial Intelligence: From Theory to Practice: 30th Int. Conf. on Industrial Engineering and Other Applications of Applied Intelligent Systems, IEA/AIE 2017. Springer (2017)

\bibitem{dung_acceptability_1995}
Dung, P.M.: On the acceptability of arguments and its fundamental role in nonmonotonic reasoning, logic programming and n-person games. Artificial {Intelligence}  \textbf{77},  321--357 (1995)

\bibitem{gelfond_action_1998}
Gelfond, M., Lifschitz, V.: Action languages. Electron. Trans. Artificial Intelligence  \textbf{2},  193--210 (1998)

\bibitem{kampik2024change}
Kampik, T., {\v{C}}yras, K., Alarc{\'o}n, J.R.: Change in quantitative bipolar argumentation: Sufficient, necessary, and counterfactual explanations. International Journal of Approximate Reasoning  \textbf{164},  109066 (2024)

\bibitem{menzies_counterfactual_2020}
Menzies, P., Beebee, H.: Counterfactual {Theories} of {Causation}. In: The {Stanford} {Encyclopedia} of {Philosophy}. Metaphysics Research Lab, Stanford University (2020)

\bibitem{miller_explanation_2018}
Miller, T.: Explanation in {Artificial} {Intelligence}: {Insights} from the {Social} {Sciences}. Artificial Intelligence  \textbf{267},  1--38 (2019)

\bibitem{pearl2000models}
Pearl, J.: Models, reasoning and inference. Cambridge, UK: Cambridge University Press  \textbf{19}(2), ~3 (2000)

\bibitem{de2016argumentation}
Dupin~de Saint-Cyr, F., Bisquert, P., Cayrol, C., Lagasquie-Schiex, M.C.: Argumentation update in {YALLA} (yet another logic language for argumentation). International Journal of Approximate Reasoning  \textbf{75},  57--92 (2016)

\bibitem{sarmiento_action_2022-1}
Sarmiento, C., Bourgne, G., Inoue, K., Ganascia, J.G.: Action {Languages} {Based} {Actual} {Causality} in {Decision} {Making} {Contexts}. In: {PRIMA}. vol. LNCS 13753. Springer (2022)

\bibitem{vassiliades2021argumentation}
Vassiliades, A., Bassiliades, N., Patkos, T.: Argumentation and explainable artificial intelligence: a survey. The Knowledge Engineering Review  \textbf{36}, ~e5 (2021)

\bibitem{walton1995commitment}
Walton, D., Krabbe, E.C.: Commitment in dialogue: Basic concepts of interpersonal reasoning. SUNY press (1995)

\bibitem{walton2008argumentation}
Walton, D., Reed, C., Macagno, F.: Argumentation schemes. Cambridge University Press (2008)

\bibitem{walton1996argument}
Walton, D.N.: Argument structure: A pragmatic theory. University of Toronto Press Toronto (1996)

\bibitem{cyras_survey}
Čyras, K., Rago, A., Albini, E., Baroni, P., Toni, F.: Argumentative {XAI}: A survey. In: {IJCAI'21}. pp. 4392--4399 (2021)

\end{thebibliography}

\newpage

\section*{Appendix}

\begin{proof}[Proof of Proposition~\ref{lem:impossible_fluents}]
    Let us prove by contradiction that $\forall t \in \mathbb{T}, \forall x$, $S^{\chi}(t) \not\models i_x \wedge o_x$. We define $t_0 = \min\limits_{t \geq 1} \{t \mid \exists x, (S^{\chi}(t-1) \not\models i_x \wedge o_x) \text{ and } (S^{\chi}(t) \models i_x \wedge o_x)\}$. 
    By construction, $\exists x_0, S^{\chi}(t_0-1) \models \neg i_{x_0} \vee \neg o_{x_0}$ and $S^{\chi}(t_0) \models i_{x_0} \wedge o_{x_0}$.

    (i) If $S^{\chi}(t_0-1) \models \neg o_{x_0}$ then to reach $S^{\chi}(t_0)$, $\exists y_0, S^{\chi}(t_0-1) \models tri(\mUnacc^2_{y_0,x_0})$, i.e. $S^{\chi}(t_0-1) \models p_{x_0} \wedge \neg o_{x_0} \wedge cA_{y_0,x_0} \wedge p_{y_0} \wedge i_{y_0}$. Moreover, by definition of $t_0$, $S^{\chi}(t_0-1) \models i_{y_0} \wedge \neg o_{y_0}$. 
    
    \phantom{a} $\bullet$ If $S^{\chi}(t_0-1) \models i_{x_0} \wedge \neg o_{x_0}$, then $S^{\chi}(t_0-1) \models tri(\mUnacc^1_{y_0,x_0})$ which is impossible as $\mUnacc^2_{y_0,x_0} \in E^{\chi}(t_0)$ and $\forall x,y, \mUnacc^1_{y,x} \succ  \mUnacc^2_{y,x}$.

    \phantom{a} $\bullet$ If $S^{\chi}(t_0-1) \models \neg i_{x_0} \wedge \neg o_{x_0}$, then in addition to $\mUnacc^2_{y_0,x_0}$, $\mAcc^2_{x_0}$ also has to trigger so that $x_0$ becomes \emph{in.} Thus, $\mAcc^2_{x_0} \in E^{\chi}(t_0 - 1)$. It means that $S^{\chi}(t_0-1) \models p_{x_0} \wedge \neg i_{x_0} \wedge \left(\bigwedge_{y} (p_y \wedge cA_{y,x_0} \wedge o_y) \vee \neg cA_{y,x_0} \right)$. It especially applies to $y_0$ and we have that $S^{\chi}(t_0-1) \models i_{y_0} \wedge o_{y_0}$. This is a contradiction by construction of $t_0$.

    By contradiction, $S^{\chi}(t_0-1) \not \models \neg o_{x_0}$

    (ii) If $S^{\chi}(t_0-1) \models \neg i_{x_0} \wedge o_{x_0}$ then $\mAcc^2_{x_0} \in E^{\chi}(t_0 - 1)$. Therefore, by definition of $t_0$, $\forall y, S^{\chi}(t_0-1) \models (p_{y} \wedge cA_{y,x_0} \wedge o_{y} \wedge \neg i_{y}) \vee \neg cA_{y,x_0}$. \\ Then, $S^{\chi}(t_0-1) \models p_{x_0} \wedge o_{x_0} \wedge \left(\bigwedge_{y} (p_y \wedge cA_{y,x_0} \wedge \neg i_y) \vee \neg cA_{y,x_0} \right)$, i.e. $S^{\chi}(t_0-1) \models \mAcc^1_{x_0}$. But, $\forall x, \mAcc^1_{x} \succ  \mAcc^2_{x}$. So this is a contradiction.

    We proved by contradiction that such a $t_0$ does not exist. Moreover, for $t =0$ there is no present argument so they are all unacceptable. Therefore, $\forall t \in \mathbb{T}, \forall x$, $S^{\chi}(t) \not\models i_x \wedge o_x$.
\end{proof}

\begin{proof}[Proof of Lemma~\ref{lem:argu_state_tri}]
    Let us start by proving that (i) $\Rightarrow$ (ii). Let $S^{\chi}(t)$ be a $\Sigma_c$-argumentative state and $e \in \mathbb{U}$ be an exogenous event. Let us prove by exhaustion on $e$ that $S^{\chi}(t) \not\models tri(e)$.
    \\ \phantom{a.}$\bullet$ Let $x,y$ such that $e = \mUnacc^1_{y,x}$ and let us suppose that $S^{\chi}(t) \models tri(e)$ i.e. $S^{\chi}(t) \models p_x \wedge i_x \wedge p_y \wedge cA_{y,x} \wedge \neg o_y$. According to Lemma~\ref{lem:impossible_fluents}, $S^{\chi}(t) \models p_x \wedge i_x \wedge \neg o_x \wedge p_y \wedge cA_{y,x} \wedge \neg o_y$. According to (i) of Definition~\ref{def:complete_state}, $S^{\chi}(t) \models p_x \wedge i_x \wedge \neg o_x$ implies that $S^{\chi}(t) \models p_x \wedge cA_{y,x} \wedge \neg i_y \wedge o_y$. So $S^{\chi}(t) \models o_y \wedge \neg o_y$. This is impossible because a state is coherent.
    \\ \phantom{a.}$\bullet$ Let $x,y$ such that $e = \mUnacc^2_{y,x}$ and let us suppose that $S^{\chi}(t) \models tri(e)$ i.e. $S^{\chi}(t) \models p_x \wedge \neg o_x \wedge p_y \wedge cA_{y,x} \wedge i_y$. According to Lemma~\ref{lem:impossible_fluents},  $S^{\chi}(t) \models p_x \wedge \neg o_x \wedge p_y \wedge cA_{y,x} \wedge i_y \wedge \neg o_y$. According to (ii) of Definition~\ref{def:complete_state}, $S^{\chi}(t) \models p_x \wedge p_y \wedge cA_{y,x} \wedge i_y \wedge \neg o_y$ implies that $S^{\chi}(t) \models p_x \wedge \neg i_x \wedge o_x$. So $S^{\chi}(t) \models o_x \wedge \neg o_x$. This is impossible because a state is coherent.
    \\ \phantom{a.}$\bullet$ Let $x$ such that $e = \mAcc^1_{x}$ and let us suppose that $S^{\chi}(t) \models tri(e)$ i.e. $S^{\chi}(t) \models p_x\wedge o_x\wedge \left(\bigwedge_{y} (cA_{y,x} \wedge \neg i_y) \vee \neg cA_{y,x} \right)$. According to Lemma~\ref{lem:impossible_fluents}, $S^{\chi}(t) \models p_x \wedge \neg i_x \wedge o_x$, so according to (ii) of Definition~\ref{def:complete_state}, $\exists y$, $S^{\chi}(t) \models p_x \wedge p_y \wedge cA_{y,x} \wedge i_y \wedge \neg o_y$. Therefore $S^{\chi}(t) \models cA_{y,x} \wedge \neg i_y \wedge i_y$. This is impossible because a state is coherent.
    \\ \phantom{a.}$\bullet$ Let $x$ such that $e = \mAcc^2_{x}$ and let us suppose that $S^{\chi}(t) \models tri(e)$ i.e. $S^{\chi}(t) \models p_x\wedge \neg i_x\wedge \left(\bigwedge_{y} (cA_{y,x} \wedge o_y) \vee \neg cA_{y,x} \right)$. According to Lemma~\ref{lem:impossible_fluents}, $S^{\chi}(t)\models p_x \wedge \left(\bigwedge_{y} (cA_{y,x} \wedge \neg i_y \wedge o_y) \vee \neg cA_{y,x} \right)$. This implies according to (i) of Definition~\ref{def:complete_state}, $S^{\chi}(t)\models p_x \wedge i_x \wedge \neg o_x$. Therefore, $S^{\chi}(t)\models i_x \wedge \neg i_x$. This is impossible because a state is coherent.


    Now let us prove that (ii) $\Rightarrow$ (i). Let $S^{\chi}(t)$ be a state such that $ \forall e \in \mathbb{U}, S^{\chi}(t) \not\models tri(e)$. Let us prove that $S^{\chi}(t)$ is a $\Sigma_c$-argumentative state, i.e. it verifies (i) and (ii) of Definition~\ref{def:complete_state}.
    \\(i) $\Leftarrow:$ Let $x$ such that $S^{\chi}(t)\models p_x \wedge \left(\bigwedge_{y} (cA_{y,x} \wedge \neg i_y \wedge o_y) \vee \neg cA_{y,x} \right)$. It is equivalent to $S^{\chi}(t)\models p_x \wedge \left(\bigwedge_{y} (cA_{y,x} \wedge \neg i_y) \vee \neg cA_{y,x} \right) \wedge \left(\bigwedge_{y} (p_y \wedge cA_{y,x} \wedge \wedge o_y) \vee \neg cA_{y,x} \right)$
    \\ As $\forall e \in \mathbb{U}, S^{\chi}(t) \not\models tri(e)$, then with $e = \mAcc^1_x$ and $e' = \mAcc^2_x $, we have that $S^{\chi}(t) \not\models (p_x \wedge \left(\bigwedge_{y} (cA_{y,x} \wedge \neg i_y) \vee \neg cA_{y,x} \right)) \wedge o_x$ and $S^{\chi}(t) \not\models (p_x \wedge \left(\bigwedge_{y} (cA_{y,x} \wedge  o_y) \vee \neg cA_{y,x} \right)) \wedge \neg i_x$. So $S^{\chi}(t) \not\models o_x$ and $S^{\chi}(t) \not\models \neg i_x$. Therefore, $S^{\chi}(t) \models p_x \wedge i_x \wedge \neg o_x$.
    \\(i) $\Rightarrow:$ Let $x$ such that $S^{\chi}(t) \models p_x \wedge i_x \wedge \neg o_x$. As $\forall e \in \mathbb{U}, S^{\chi}(t) \not\models tri(e)$, then $\forall y, S^{\chi}(t) \not\models tri(\mUnacc^1_{y,x})$ and $S^{\chi}(t) \not\models tri(\mUnacc^2_{y,x})$. Therefore, $\forall y, S^{\chi}(t) \not\models cA_{y,x} \wedge p_y \wedge \neg o_y$ and $S^{\chi}(t) \not\models cA_{y,x} \wedge p_y \wedge i_y$. If $S^{\chi}(t) \models p_y$ then $S^{\chi}(t) \models o_y \vee \neg cA_{y,x}$ and $S^{\chi}(t) \models \neg i_y \vee \neg cA_{y,x}$. If $S^{\chi}(t) \models \neg p_y$ then $y$ has never been enunciated and so is unacceptable i.e$S^{\chi}(t) \models \neg i_y \wedge o_y$. Therefore, $\forall y, S^{\chi}(t) \models (cA_{y,x} \wedge o_y \wedge \neg i_y) \vee \neg cA_{y,x}$. So $S^{\chi}(t)\models p_x \wedge \left(\bigwedge_{y} (cA_{y,x} \wedge \neg i_y \wedge o_y) \vee \neg cA_{y,x} \right)$
    \\(ii) $\Rightarrow:$ Let $x$ such that $S^{\chi}(t)\models p_x \wedge \neg i_x \wedge o_x$. As $S^{\chi}(t)\not\models tri(\mAcc^1_x)$ then $\exists y$ such that $S^{\chi}(t)\models \neg ((cA_{y,x} \wedge \neg i_y) \vee \neg cA_{y,x})$ i.e. $S^{\chi}(t)\models cA_{y,x} \wedge i_y$. According to Lemma~\ref{lem:impossible_fluents}, $S^{\chi}(t)\models cA_{y,x} \wedge i_y \wedge \neg o_y$. Moreover, as $S^{\chi}(t)\models i_y \wedge \neg o_y$, $S^{\chi}(t)\models p_y$. Therefore, $\exists y, S^{\chi}(t)\models p_x \wedge p_y \wedge cA_{y,x} \wedge i_y \wedge \neg o_y$.
    \\(ii) $\Leftarrow:$ Let $x$ such that $\exists y, S^{\chi}(t)\models p_x \wedge p_y \wedge cA_{y,x} \wedge i_y \wedge \neg o_x$. As $S^{\chi}(t)\not\models tri(\mUnacc^2_{y,x})$ then $S^{\chi}(t)\models o_x$. According to Lemma~\ref{lem:impossible_fluents}, $S^{\chi}(t)\models \wedge \neg i_x \wedge \neg o_x$. Therefore, $S^{\chi}(t)\models p_x \wedge \neg i_x \wedge o_x$.

    
\end{proof}

\begin{proof}[Termination]
    Let $S^{\chi}(t)$ be a $\Sigma_c$-argumentative state such that $E(t) = \{enunciate_x\}$. Let us prove that $\exists t' \geq t$, $S^{\chi}(t')$ is a $\Sigma_c$-argumentative state.

    As $E(t) = \{ enunciate_x \}$ then $\forall f \in \mathbb{F} \setminus \{p_x, i_x, o_x\}$ if $S^{\chi}(t) \models f$, $S^{\chi}(t+1) \models f$. Moreover, $S^{\chi}(t+1) \models e\ff(enunciate_x) = p_x \wedge i_x \wedge \neg o_x$. According to Lemma~\ref{lem:argu_state_tri}, as $S^{\chi}(t)$ is a $\Sigma_c$-argumentative state then $\forall e \in \mathbb{U}$, $S^{\chi}(t) \not\models tri(e)$. So, $\forall e \in \mathbb{U}\setminus \{\mUnacc^1_{y,x}, \mUnacc^2_{y,x}, \mUnacc^1_{x,y}, \mUnacc^2_{x,y} \mid \forall y \}$, $S^{\chi}(t) \not\models tri(e)$.
    \\ As $\forall x_1,y_1,x_2,y_2, \mUnacc^1_{y_1,x_1} \succ_\mathbb{E} \mUnacc^2_{y_2,x_2}$, then if $\exists y$, $S^{\chi}(t+1) \models tri(\mUnacc^1_{y,x}) \vee tri(\mUnacc^1_{x,y})$ then $E(t+1) = \{\mUnacc^1_{y,x} \mid S^{\chi}(t+1) \models tri(\mUnacc^1_{y,x}) \} \cup \{\mUnacc^1_{x,y} \mid S^{\chi}(t+1) \models tri(\mUnacc^1_{x,y}) \}$ i.e. all the exogenous events of the form $\mUnacc^1_{b,a}$ that can be triggered are triggering. 
    \\ Then, as $e\ff(\mUnacc^1_{b,a} = \neg i_a$,  $\forall e \in E(t+2)$, $e$ is of the form $\mUnacc^2$ or $\mAcc^1$. As $\forall a,b,c, \mAcc^1_c \succ_\mathbb{E} \mUnacc^2_{b,a}$, if $\exists z, S^{\chi}(t+2) \models \mAcc^1_z$ then $\forall e \in E(t+2)$, $e$ is of the form $\mAcc^1$. Therefore, as $e\ff(I^1_z) = \neg o_x$, again the only events that can be triggered are of the form $\mUnacc^1$ or $\mUnacc^2$. Then because of the priority rule $R_3$, we are in the same scenario as before. Therefore, the only event that are triggering at each state are $\mUnacc^1$ followed by $\mAcc^1$. As long as this chain is not broken, no other events can be triggered. The effects of this two class of exogenous events being not contradictory, it can be triggered at most as much as the number of arguments. The number of arguments being finite, there exists $t_1 \geq t$ such that no events of this type are triggered.

    Now, as the effects of those events cannot trigger $\mAcc^2$, then the only exogenous events that can be triggered are $\mUnacc^2_{y,x}$ with $x$ the argument that has just been enunciated. If it is not triggering, then we are in a state where no more exogenous events can trigger so according to Lemma~\ref{lem:argu_state_tri}, $S^{\chi}(t_1)$ is a $\Sigma_c$-argumentative state.

    Otherwise, $S^{\chi}(t_1 + 1) \models o_x$. Along the same line, the only events that can be triggered in this state are of the form $\mAcc^2$. If nothing triggers then we are in a $\Sigma_c$-argumentative state, else $\exists y, S^{\chi}(t_1 + 2) \models i_y$. \\ Again, because some arguments~$(y_i)$ became \emph{in}, the only events that can trigger are $\mUnacc^2_{y_i,z_i}$. Finally, there is a chain of triggering with $\mUnacc^2$ being followed by $\mAcc^2$. As their effect are not cancelling each other, this can happened at most as much as the number of arguments. This number being finite, $\exists t_2 \geq t_1$ such that $\forall e \in \mathbb{U}, S^{\chi}(t_2) \not\models t_2$. According to Lemma~\ref{lem:argu_state_tri}, $S^{\chi}(t_2)$ is a $\Sigma_c$-argumentative state.
        
\end{proof}

\begin{proof}[Correctness]
    Let $S{\chi}(t)$ be a $\Sigma_c$-argumentative state and $\mathcal{L}_t$ its associated labelling. Let us prove that $\mathcal{L}_t$ is a complete labelling of $AF = (Ar,\mathcal{R})$ : \\ (i) $\forall x \in Ar$, $\mathcal{L}_t(x) = IN \Leftrightarrow \forall y \in Att_x, \mathcal{L}_t(y) = OUT$ \\ (ii) $\forall x \in Ar$, $\mathcal{L}_t(x) = OUT \Leftrightarrow \exists y \in Att_x, \mathcal{L}_t(y) = IN$

    (i): Let $x \in Ar$ be an argument such that $\mathcal{L}_t(x) = IN$. By definition of $\mathcal{L}_t$, $S^{\chi}(t) \models p_x \wedge i_x \wedge \neg o_x$. As $S^{\chi}(t)$ is a $\Sigma_c$-argumentative state, it is equivalent to $S^{\chi}(t)\models p_x \wedge \left(\bigwedge_{y} (cA_{y,x} \wedge \neg i_y \wedge o_y) \vee \neg cA_{y,x} \right)$. Therefore, $\forall y \in Att_x, S^{\chi}(t)\models \neg i_y \wedge o_y$ i.e. by definition of $\mathcal{L}_t$, $\mathcal{L}_t(y) = OUT$. 

    (ii): Let $x \in Ar$ be an argument such that $\mathcal{L}_t(x) = OUT$. By definition of $\mathcal{L}_t$, $S^{\chi}(t) \models p_x \wedge \neg i_x \wedge o_x$. As $S^{\chi}(t)$ is a $\Sigma_c$-argumentative state, it is equivalent to $\exists y, S^{\chi}(t) \models p_x \wedge p_y \wedge cA_{y,x} \wedge i_y \wedge \neg o_y$. Therefore, $\exists y \in Att_x, S^{\chi}(t) \models i_y \wedge \neg o_y$ i.e. by definition of $\mathcal{L}_t$, $\mathcal{L}_t(y) = IN$.
    \end{proof}

\begin{proof}[Completeness of the modified transformation]
    Let $\mathcal{L}_c$ be a complete labelling for $AF = (\mathcal{A},\mathcal{R})$. In this proof, we will build a sequence $\varsigma_c$ that leads to this complete labelling using the modified transformation.

    At time $t = 0$, all the arguments labelled $OUT$ by $\mathcal{L}_c$ are enunciated. Then when an argumentative state is reached at $t_{IN}$, we enunciate all the arguments labelled $IN$ by $\mathcal{L}_c$. Based on the characterisation of a complete labelling provided in~\cite{caminada2017argumentation}, $b$ is labelled $OUT$ iff $\exists a \in \mathcal{A}$ that is labelled $IN$. Moreover, $a$ is labelled $IN$ iff $\forall b \in Att_a$, $b$ is labelled $OUT$. Therefore, $\forall b$ labelled $OUT, \exists a$ such that $S^\chi(t_{IN}+1) \models tri(\mUnacc^1_{a,b})$. As arguments labelled $IN$ have just been stated, their acceptability cannot be changed because there are other updates that can be triggered. After that, for the same reason, we only have $S^\chi(t_{IN} +2) \models tri(\mUnacc^1_{a,b})$. We reach an argumentative state $S^\chi(t_{U})$ in which arguments labelled $IN$ are acceptable and the ones labelled $OUT$ are unacceptable.

    Finally, we enunciate all the arguments labelled $UNDEC$. As $\forall b \in Att_a$, $b$ is labelled $OUT$ and all the other arguments are unacceptable, the only events that can be triggered are concerning the new arguments. According to~\cite{caminada2017argumentation}, an argument $c$ is labelled $UNDEC$ iff $\forall x \in Att_c$, $x$ is not labelled $IN$ and $\exists y \in Att_c$ such that $y$ is not labelled $OUT$. Therefore, $\forall c$ labelled $UNDEC, \exists y$ that is labelled $UNDEC$ such that $S^\chi(t_{U}) \models tri(\mUnacc^1_{y,c})$. Applying this for all $UNDEC$ arguments, we reach a new state $S^\chi(t_U+1)$ in which all these arguments are $\neg i \wedge \neg o$, i.e. $UNDEC$ for the associated labelling.
\end{proof}

\end{document}